 %%%%%%%%%%%%%%%%%%%%%%%%%%%%%%%%%%%%%%%%%%%%%%%%%%%%%%%%%%%%%%%%%%%%%%%%%%%%%%%%
%2345678901234567890123456789012345678901234567890123456789012345678901234567890
%        1         2         3         4         5         6         7         8

%\documentclass{article}
\documentclass[letterpaper, 10 pt, conference]{ieeeconf}  % Comment this line out
% \documentclass[a4, 10 pt, conference]{ieeeconf}  % Comment this line out
                                                          % if you need a4paper
% \documentclass[letterpaper, 10pt, conference]{ieeeconf}      % Use this line for a4
                                                          % paper

\IEEEoverridecommandlockouts                              % This command is only
                                                          % needed if you want to
                                                          % use the \thanks command
\overrideIEEEmargins
% See the \addtolength command later in the file to balance the column lengths
% on the last page of the document

% "the paper material (including text, figures, tables, acknowledgement, etc.) must fit into 6 pages, while there is no page limit for the bibliography/references (n pages) section."
% https://www.icra2019.org/contribute/call-for-papers

% The following packages can be found on http:\\www.ctan.org
%\usepackage{graphics} % for pdf, bitmapped graphics files
%\usepackage{epsfig} % for postscript graphics files
%\usepackage{mathptmx} % assumes new font selection scheme installed
\usepackage{times} % assumes new font selection scheme installed

\setlength{\marginparwidth}{1.5cm}

\usepackage[utf8]{inputenc}
\usepackage{hyperref}
\usepackage{booktabs}
\usepackage{multirow}
\usepackage{subcaption} %-- This makes table captions normal looking and smaller
\usepackage{array}
\usepackage{xspace}
\usepackage{flushend}
\usepackage{algorithm,algorithmicx}

\usepackage{amsmath,amssymb,amsfonts,amsthm}
\usepackage[noend]{algpseudocode}
\usepackage{comment}
\usepackage{siunitx}
\usepackage{soul}
% \usepackage{algpseudocode}

% http://bytesizebio.net/2013/03/11/adding-supplementary-tables-and-figures-in-latex/

\hypersetup{urlcolor=blue, colorlinks=true}

\usepackage[colorinlistoftodos,prependcaption,textsize=tiny]{todonotes}

\graphicspath{
    {./figs/}
}

\newcommand{\calM}{\ensuremath{\mathcal{M}}\xspace}
\newcommand{\calW}{\ensuremath{\mathcal{W}}\xspace}
\newcommand{\calP}{\ensuremath{\mathcal{P}}\xspace}
\newcommand{\calE}{\ensuremath{\mathcal{E}}\xspace}
\newcommand{\calX}{\ensuremath{\mathcal{X}}\xspace}
\newcommand{\calG}{\ensuremath{\mathcal{G}}\xspace}
\newcommand{\calO}{\ensuremath{\mathcal{O}}\xspace}
\newcommand{\calD}{\ensuremath{\mathcal{D}}\xspace}
\newcommand{\calQ}{\ensuremath{\mathcal{Q}}\xspace}
\newcommand\Tbound{\ensuremath{T_{\rm bound}}\xspace}
\newcommand\Sstart{\ensuremath{s_{\textrm{start}}}\xspace}
\algrenewcommand\algorithmicindent{0.5em}

\newtheorem{lemma}{Lemma}
\newtheorem{definition}{Definition}

% \title{Provably Fixed-time Motion Generation in Semi-structured Environments using Preplanned Alternate Paths}
\title{Alternative Paths Planner (APP) for Provably Fixed-time \\ Manipulation Planning in Semi-structured Environments}

\author{ Fahad Islam$^{1,2}$, Chris Paxton$^{1}$, Clemens Eppner$^{1}$, Bryan Peele$^{1}$, Maxim Likhachev$^{2}$, and Dieter Fox$^{1,3}$%
\thanks{$^{1}$NVIDIA, $^{2}$Carnegie Mellon University, $^{3}$University of Washington.}%
}
% , Paul G.~Allen School of Computer Science \& Engineering, Seattle, WA, USA
\begin{document}

\maketitle

\begin{abstract}
%In many robotic applications e.g. warehousing and manufacturing etc., robots often operate in \emph{semi-structured} environments alongside humans or other robots. These environments largely have static occupancy, yet they may contain some movable obstacles that a robot may encounter at arbitrary positions in its workspace.
In many applications, including logistics and manufacturing, robot manipulators operate in \emph{semi-structured} environments alongside humans or other robots. These environments are largely static, but they may contain some movable obstacles that the robot must avoid. % may encounter at arbitrary positions in its workspace.
%%%%%%%The manipulation tasks in such applications are often highly repetitive e.g sorting mail or shelving/unshelving items in a warehouse, and require fast and reliable motion planning capabilities, often under strict time constraints.
Manipulation tasks in these applications are often highly repetitive, %e.g sorting mail or shelving/unshelving items in a warehouse
 but require fast and reliable motion planning capabilities, often under strict time constraints.
%%%%%%
%Preprocessing-based approaches are shown to be beneficial when the environments are highly-structured i.e. only have static occupancy, but their performance typically degrades with the addition of movable obstacles, since they are not modelled apriori in the preprocessing stage.
%Preprocessing-based approaches are beneficial when the environments are highly-structured, but their performance degrades in the presence of movable obstacles, since these are not modelled a priori. % in the preprocessing stage.
Existing preprocessing-based approaches are beneficial when the environments are highly-structured, but their performance degrades in the presence of movable obstacles, since these are not modelled a priori. % in the preprocessing stage.
%%%%%%
%We propose a preprocessing-based method called Alternate Paths Planner (APP), which provides provably fixed-time planning guarantees in semi-structured environments under certain assumptions on the structure of these environments.
%We propose a preprocessing-based method called Alternate Paths Planner (APP), which provides provably fixed-time planning guarantees in semi-structured environments. 
We propose a novel preprocessing-based method called Alternative Paths Planner (APP) that provides provably fixed-time planning guarantees in semi-structured environments. % under certain assumptions. % on the structure of these environments.
%APP preplans a set of alternate paths offline such that, at least one of the paths from this set is guaranteed to be collision free for any positioning of the movable obstacles online, and can be looked up efficiently within a few microseconds. We evaluate APP on a 7 DoF robot arm in different semi-structured domains and show several orders of magnitude speed up in planning times over the baselines. We also perform real robot experiments to validate the approach.
% NOTE: plans ... offline -- replans was redundant?
% APP plans a set of alternate paths offline such that, at least one of the paths from this set is guaranteed to be collision free for any positioning of the movable obstacles online, and can be looked up efficiently within a few microseconds.
APP plans a set of alternative paths offline such that, for any configuration of the movable obstacles, at least one of the paths from this set is collision-free. During online execution, a collision-free path can be looked up efficiently within a few microseconds.
%We evaluate APP on a 7 DoF robot arm in different semi-structured domains and show several orders of magnitude speed up in planning times over the baselines. We also perform real robot experiments to validate the approach.
We evaluate APP on a 7 DoF robot arm in semi-structured domains of varying complexity and demonstrate that APP is several orders of magnitude faster than state-of-the-art motion planners for each domain. We further validate this approach with real-time experiments on a robotic manipulator.
\end{abstract}

\section{Introduction}
%% motivate the need of planners for semi-static environments
In a wide range of environments, be it warehouses, factories or homes, robots share their workspace with humans or other robots. While the obstacle occupancy in such environments is largely fixed or static, yet small regions within these environments may vary in occupancy during operation. We categorise such environments as \emph{semi-structured}. Additionally in these applications robots are often performing repetitive tasks and require fast and reliable motion planning capabilities.

%% use example
Consider the scenario in Fig.~\ref{fig:cover}. The robot must grasp the bowl while avoiding the pitchers, table and walls.
%Most of the environment (e.g. table, walls, free space above the table etc.) can be assumed to be known, and since the task of picking up an object from a tabletop is repetitive, some information from previous similar planning queries can be re-used to speedup planning.
Most of the environment (e.g. table, walls, free space above the table etc.) can be assumed to be known, and since the task is repetitive some information from previous planning queries can be re-used to speedup planning.
 %More commonly used planning algorithms such as RRTs neither exploit the distinction between static and movable obstacles, nor do they take advantage of the repetitive nature of planning problems to speedup planning. 
Commonly used algorithms such as RRTs~\cite{lavalle1998rapidly,kuffner2000rrt} neither exploit the distinction between static and movable obstacles nor take advantage of the repetitive nature of planning problems to speedup planning. 

%% existing approaches
%Preprocessing based methods like PRMs, precompute roadmaps to speedup online planning. They
Roadmap-based methods like Probabilistic Roadmaps (PRMs)~\cite{kavraki1996probabilistic} are either limited to fully static environments or incur considerable overheads in repair operations which they require to handle movable obstacles.
Improvements to PRM have minimized expensive collision checks~\cite{kavraki2000path}, accounted for dynamic environments~\cite{leven2002framework,yang2017hdrm}, and captured repetitive motions~\cite{lehner2018repetition}, but their performance can still vary dramatically depending on a planning query.
%While there exists several algorithms that use information from previous similar planning episodes such as
Instead of preprocessing collision information offline, another class of planners exploits previous experiences to speed up search~\cite{Phillips-RSS-12,berenson2012robot,coleman2015experience}. 
However, these see their performance degrade when the past information is invalidated by the changes in the environment. All these methods therefore can not provide fixed-time planning guarantees. 

%% my prior work
% Recently, provably fixed-time motion planning approaches have been proposed for repetitive tasks in fully known environments~\cite{islam2019provable,islam2020provably}. They precompute library of paths using the known models of obstacles and other task-specific information. These approaches have been used before for mail-sorting~\cite{islam2019provable} and for grasping objects moving down a conveyor belt~\cite{islam2020provably}. They, however, were not designed to handle movable obstacles, whose position may change in between planning queries.
Provably fixed-time planners were recently proposed
in~\cite{islam2019provable,islam2020provably} for applications that involve repetitive tasks. Given a start state, a goal region and known model of the environment, they precompute a compressed set of paths that can be utilized online to plan to any goal within the goal region in constant or bounded time. 
These approaches have been used before for mail-sorting~\cite{islam2019provable} and for grasping objects moving down a conveyor belt~\cite{islam2020provably}
While our approach APP draws inspiration from these works, we focus on enabling online obstacle avoidance by explicitly accounting for movable obstacles during the preprocessing phase by using the alternative paths planning approach.

\begin{figure}[bt]
\centering
\includegraphics[width=\columnwidth]{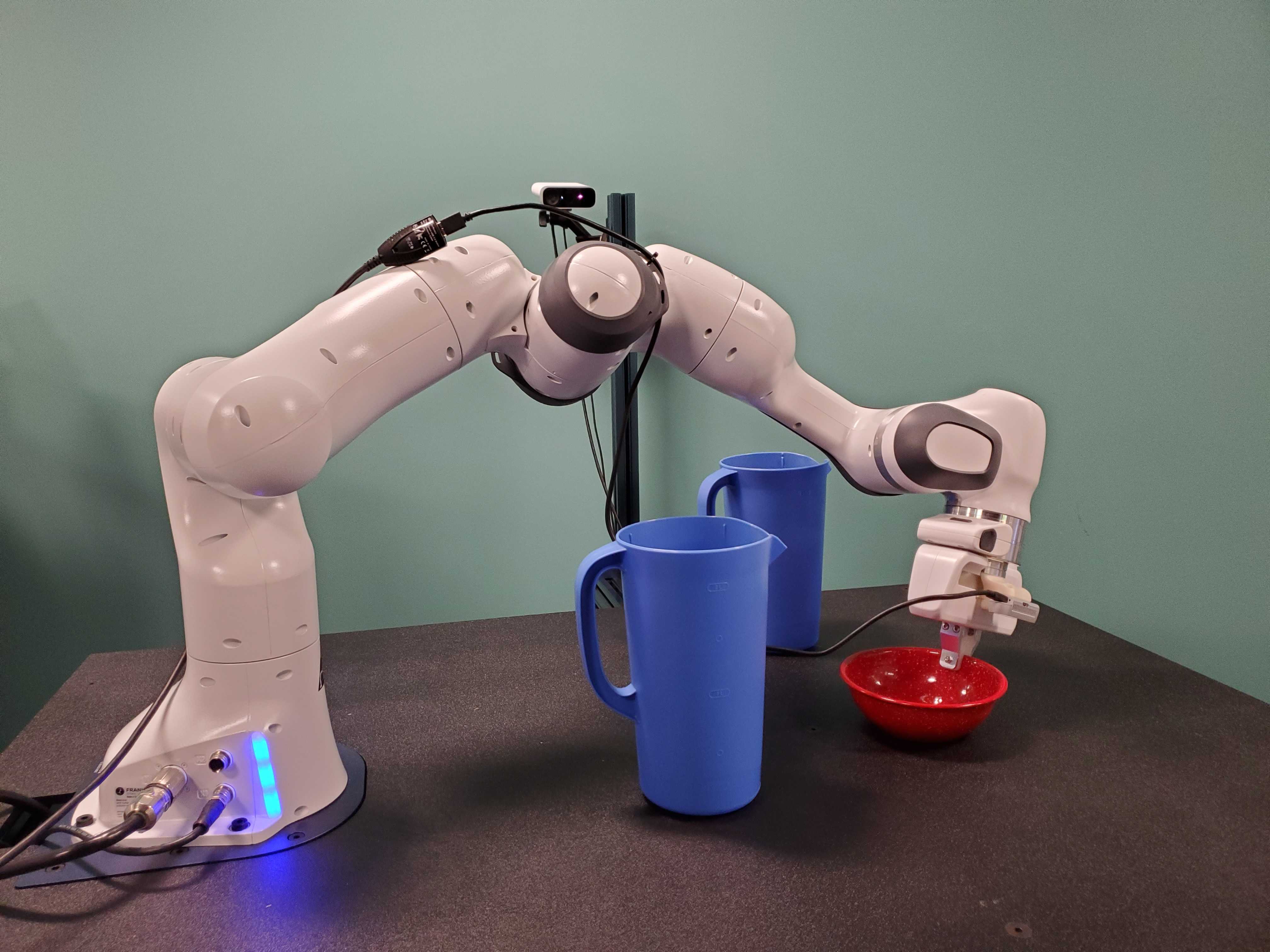}
\caption{Semi-structured tabletop manipulation scenario --- The robot must grasp the bowl while avoiding pitchers and the table itself; the pitchers and bowl may be at any position. The Alternative Paths Planner (APP) guarantees to find a solution for any feasible configuration within few microseconds.}
\label{fig:cover}
% \vskip -0.5cm
\end{figure}

%% our contribution
%In this work we aim to extend these recent approaches to be able to handle movable obstacles while still provably fixed-time planning guarantees.
%In this work w
We extend these approaches to handle movable obstacles while still guaranteeing provably fixed-time planning via the \emph{Alternative Paths Planner (APP)}.
%key idea of our approach is to generate multiple alternative paths for each planning problem, such that for any configuration of movable obstacles, one of these alternative paths is guaranteed to be traversable.
The key idea is to generate multiple alternative paths for each planning problem, such that for any configuration of movable obstacles, one path is guaranteed to be traversable.
Our approach to achieve this is to enforce %a constraint
that these paths are \emph{disjoint}: they are separated enough that no single obstacle configuration can intersect two paths simultaneously.
%To get the aforementioned guarantee, $n$ movable obstacles require generating only $n+1$ fully disjoint paths.
Handling $n$ movable obstacles requires generating only at most $n+1$ fully disjoint paths.
%However, if $n+1$ disjoint paths cannot be found for the given problem, our approach uses more preprocessing and finds additional paths to provide the guarantee.
However, if the required number of disjoint paths cannot be found for the given problem, we find additional partially overlapping paths (that do not satisfy disjointness criterion) to provide the same guarantee.
%We make some simplistic assumptions about the geometry of movable obstacles to provide strong guarantees on the planning time.
%We call this approach the \emph{Alternative Paths Planner (APP)}

%APP can be applied to many performance critical real-world robotics use cases such as mail-sorting, conveyor-belt picking, and shelving. 
APP can be applied to performance-critical real-world use cases such as mail-sorting, conveyor-belt picking, and shelving. 
For scenarios with multiple movable obstacles, we demonstrate microsecond-scale path-lookup times and achieve 100\% success rate. We include experimental results from three domains, compare with state-of-the-art existing approaches and validate our approach with a real-world case study.

%\section{Related Work}
%\input{03background.tex}

\section{Algorithmic Framework}
%\subsection{Problem Setup and Assumptions}
\label{subsec:setup}
%Let \calX be the configuration space of the robot. Let $\Sstart \in \calX$ be a fixed start configuration of the robot and \calG be the goal region. 
Let \calX be the configuration space of the robot, $\Sstart \in \calX$ be a fixed start configuration of the robot, and \calG be the goal region. 
%\calG could be defined in the configuration space \calX i.e $\calG \subset \calX$ 
\calG could be defined in the configuration space \calX (i.e $\calG \subset \calX$) or it could be under-defined, e.g., as the position in $\mathbb{R}^3$ or pose in $SE(3)$ of a target object or the robot end-effector. \calG is discretized to have a finite set of goals $G$. 

Let \calW be a 3D semi-static world in which the robot operates. \calW contains a fixed set of movable obstacles $\calO = \{o_1, o_2, o_3,...,o_n\}$, that occupy $\calW_\calO \subset \calW$, and can be displaced in between different planning queries. A movable obstacle $o_i$ can attain any configuration within a space $\calQ(o_i)$ which is discretized into a set of configurations $Q(o_i)$. Additionally, \calW has a partial occupancy $\calW_S$, which is static i.e $\calW_S = \calW \backslash \calW_\calO$. The motion planning problem is to find a collision-free path for the robot operating in \calW from \Sstart to any goal $g \in G$.

% \begin{assumption}
% \label{assum:1}
% We assume that all obstacles $o_i \in \calO$ are identical in geometry.
% \end{assumption}

\subsection{Approach}
%In this work, we aim to develop an algorithm that can provide provable bounds on the planning time for each planning query.
In this work, we describe an algorithm that provides provable bounds on the planning time for each planning query, where these bounds are small enough to guarantee real-time performance.
%Furthermore we want the bounds to be small to guarantee real-time performance.
%Before we describe our approach, we begin by briefly describing a strawman algorithm which solves the aforementioned problem with the provable bounded-time guarantee but is practically prohibitive due to its memory and computational requirements.
We begin by briefly describing a strawman algorithm which solves the aforementioned problem with the provable bounded-time guarantee but is practically prohibitive due to its memory and computational requirements.
%This will also prepare the ground to describe our method which also provides bounded-time guarantee but with minimal memory and computational load.

\subsubsection{Strawman Algorithm}
Assume that we have access to a motion planner \calP that can be used offline to find feasible paths for the given planning problems. The naive approach
%of providing bounded-time guarantee
is to precompute and store paths for all possible planning problems using \calP %owing to the problem setup described in Sec.~\ref{subsec:setup},
offline in a preprocessing stage.
%, and at 
%Then, at query time a lookup table is used to answer any query in bounded time. Specifically, it requires computing the lookup table
At query time, a lookup table can answer any query in bounded time. Specifically, we precompute the lookup table
$$
\calM : G \times Q(o_1) \times Q(o_2) \times ... \times Q(o_n) \rightarrow \{\pi_1, \pi_2,\pi_3,...\}
$$

\noindent that maps every configuration of goal and the $n$ obstacles to a unique path $\pi_i$ resulting in $|G|.|Q(o_1)|.|Q(o_2)|....|Q(o_n)|$ paths. Assuming that each object $o_i$ can be in the same set of configurations in $\calW$, the space complexity then is $O(|G|.|Q(o_i)|^{n})$ which is exponential in the number of movable objects in \calO.
%This complexity clearly is severely restrictive in practice.

\subsubsection{Proposed Approach}
While the strawman algorithm provides bounded-time guarantees, its memory and precomputation load is prohibitive for practical purposes. To this end we propose an algorithm that can provide bounded-time guarantees but requires significantly smaller resources in practice.

Our key idea is that, for the given \Sstart and $g$, instead of precomputing path for every possible configuration of obstacles \calO, our algorithm systematically precomputes a small set of paths $\Pi_g$ from \Sstart to $g$ with the guarantee that for any possible configuration of \calO in \calW, at least one path $\pi \in \Pi_g$ will be collision free. At query time, this path can be looked up efficiently in provably bounded time.

\subsection{Algorithm Overview}
Before we describe the approach, we introduce some terminology.

\begin{definition}[Envelope]
\label{def:envp}
For a path $\pi$, an envelope is a set of all obstacle configurations $e_\pi \subset Q(o_1) \cup Q(o_2) \cup...\cup Q(o_n)$, that collide with any robot state $s \in \pi$, except \Sstart, and are not within $\epsilon$ distance of $\textsc{Proj}(g)$
\end{definition}

\noindent where $\textsc{Proj}(g)$ is the projection of $g$ in $\mathbb{R}^3$ and $\epsilon$ is in Euclidean space. Fig.~\ref{fig:envp} illustrates an envelope with a simple example.
The implementation details of envelope construction for the manipulation domain are discussed in Sec.~\ref{sec:impl_details}

\begin{figure}[bt]
\centering
\includegraphics[width=\columnwidth]{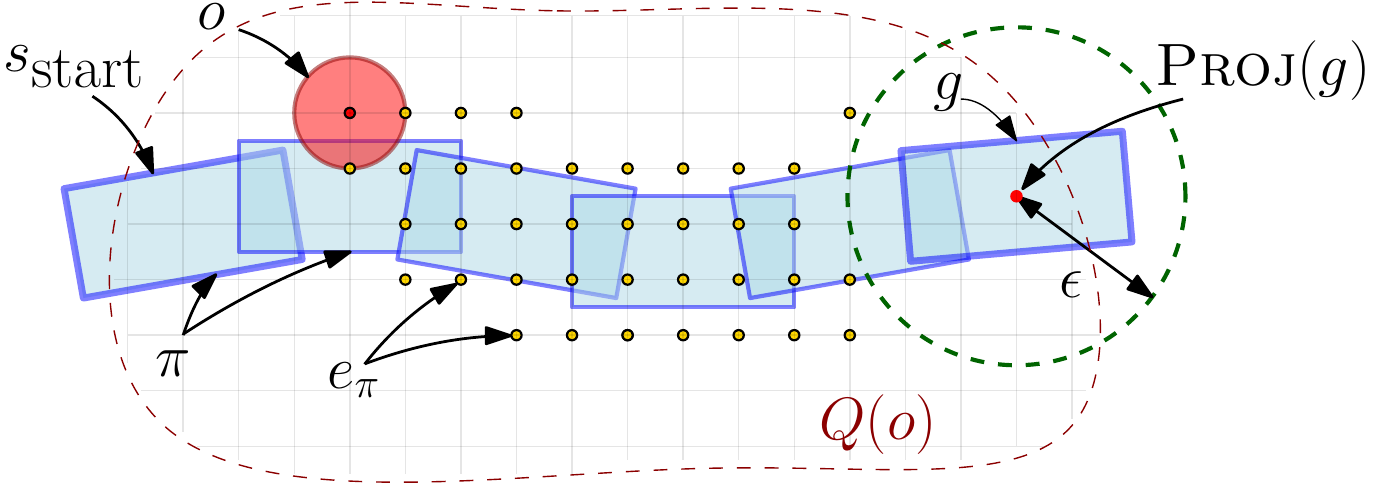}
\caption{Depiction of an envelope for a single movable obstacle $o$: the blue rectangles represent the robot states in $SE(2)$ along the path $\pi$, the red circle is the movable obstacle $o$ that can have any position in $Q(o) \in \mathbb{R}^2$ and the small circles show positions $q(o)$ for which $o$ collides with $\pi$ and thus constitute the envelope $e_\pi$}
\label{fig:envp}
% \vskip -0.5cm
\end{figure}

\begin{definition}[Disjoint paths]
\label{def:disjointness}
The paths $\pi_1,\pi_2,...,\pi_n$ are disjoint if their corresponding envelopes $e_{\pi_1},e_{\pi_2},...e_{\pi_n}$ are disjoint sets.
\end{definition}

%For simplicity, consider a simple planning problem of finding a collision-free path from \Sstart to a goal $g \in G$ in the presence of a single movable obstacle $o$ and some static occupancy $\calW_S$.
Consider finding a collision-free path from \Sstart to a goal $g \in G$ in the presence of a single movable obstacle $o$ and some static occupancy $\calW_S$.
We first find a path $\pi_1$ from \Sstart to $g$ using \calP while avoiding collisions with only $\calW_S$.
Next we construct an envelope $e_{\pi_1}$ around $\pi_1$, where $e_{\pi_1}$ (by Def.~\ref{def:envp}) is the set of all configurations that $o$ can take that invalidate $\pi_1$.
If $e_{\pi_1}$ is non-empty, we attempt to find a second path $\pi_2$ that avoids collisions with $\calW_S$ as well as with the occupancy of $e_{\pi_1}$.
%This gives us two disjoint paths $\pi_1$ and $\pi_2$ (since $e_{\pi_2}$ and $e_{\pi_1}$ would be disjoint by Def.~\ref{def:disjointness}).
This gives us two disjoint paths $\pi_1$ and $\pi_2$, by Def.~\ref{def:disjointness}.
%These two paths constitute the minimal set of paths with the guarantee that for any configuration $q(o) \in Q(o)$, one of these two paths will be collision free.
These two paths constitute the minimal set of paths with the guarantee that for any configuration $q(o) \in Q(o)$, one path will be collision free.
%At query time given any configuration $q(o)$, we can check if it lies within $e_{\pi_1}$ or not and based on that, use the path $\pi_2$ or $\pi_1$ respectively. By storing envelopes as sets implemented using hash tables, we can check it in constant time~\cite{czech1997perfect}.
At query time given any configuration $q(o)$, we can check if it lies within $e_{\pi_1}$ or not and use the path $\pi_2$ or $\pi_1$ respectively. By storing envelopes as sets implemented using hash tables, we can check it in constant time~\cite{czech1997perfect}. Note that if $e_{\pi_1}$ is empty, we only require a single path $\pi_1$.

\textbf{Extending to $n$ movable obstacles:} A single movable obstacle, requires at most two disjoint paths. Extending to $n$ objects would require at most $n+1$ disjoint paths. Therefore, for the scenarios in which the required number of disjoint paths exists and \calP can find them within a given allowed planning time, the computational complexity of the preprocessing phase grows linearly with number of movable obstacles i.e $O(|\calO|)$.
If \calP fails to find the required number of disjoint paths, however,
%which is possible if the paths do not exist or \calP cannot find them in a finite preprocessing time,
then this method is incomplete as is, and it would require more preprocessing efforts to find more paths to satisfy the criterion that for any possible configuration of the set of obstacles \calO, at least one of the paths from the set is guaranteed to be collision free. %The following section discusses the details of the complete preprocessing and query algorithms.

\subsection{Algorithm Details}
%We now describe the two phases of our algorithm, i.e., the preprocessing and the query phases.
We now describe the two phases of our algorithm: the preprocessing and the query phases.

\subsubsection{Preprocessing}
The preprocessing algorithm is described in Alg.~\ref{alg1} and is illustrated in Fig.~\ref{fig:illustration}. It starts with finding the first collision-free path $\pi_1$ from \Sstart to the given goal $g \in G$ in $\calW_S$. For each computed path $\pi_j$, the algorithm maintains the set of envelopes $\calE_{\pi_j}$ that were avoided while computing $\pi_j$. For the first path $\pi_1$, the set $\calE_{\pi_1}$ is empty since \calP only considers $\calW_S$ for this step. The algorithm then runs a loop for $n=|\calO|$ iterations, attempting to find $n$ more mutually disjoint paths (loop at line~\ref{alg1:l1}). Another loop at line~\ref{alg1:l2} is needed for the case when $n+1$ disjoint paths do not exist and therefore, more than one paths are computed in a single iteration of loop at line~\ref{alg1:l1}.
In every iteration of the loop at line~\ref{alg1:l2}, the algorithm first constructs the envelope $e_{\pi_j}$ around the path $\pi_j$ found in the previous iteration of loop at line~\ref{alg1:l1}. Next, $\pi_j'$ is computed while avoiding collisions with the occupancy of the set of envelopes $\calE_{\pi_j}'$, which is the union of $e_{\pi_j}$ and the envelopes in $\calE_{\pi_j}$. $\calE_{\pi_j}'$ thus constitutes the set of avoided envelopes for $\pi_j'.$

For the case when \calP fails to find the disjoint path at any iteration of loop at line~\ref{alg1:l1}, the algorithm bisects one of the envelopes (line~\ref{alg1:bisect} and Alg.~\ref{alg2}) that \calP attempted to avoid, resulting in two sets of envelopes (each one containing a bisected envelope along with the remaining envelopes). \calP then tries to find paths around the occupancy of envelopes in each of the two sets independently. Note that this process (Alg.~\ref{alg2}) repeats recursively until either \calP successfully finds paths around all the newly created sets of envelopes or until further recursion is not possible. The latter occurs in the worst case, when the algorithm recurses down to the deepest level, where each envelope contains individual obstacle configurations. Owing to the structure of these envelopes, we implement them as binary-trees. Because of this envelope bisection, the algorithm may compute more than one path in a single iteration of loop at line~\ref{alg1:l1}. It therefore maintains a set of paths $\Pi_i$ for each iteration and attempts to compute disjoint paths for all paths $\pi_j \in \Pi_i$ in the following iteration.

The preprocessing algorithm (Alg.~\ref{alg1}) returns a database of paths $\Pi_g = \Pi_1 \cup \Pi_2 \cup ... \cup \Pi_{n+1}$ from \Sstart to a goal $g$ with the guarantee that for any possible configuration of obstacles \calO, one of the paths $\pi \in \Pi_g$ is collision free. To cover the full goal region $G$, Alg~\ref{alg1} is called for each $g \in G$ in the preprocessing phase.

\begin{figure*}[t]
    \centering
    \begin{subfigure}{0.24\textwidth}
    %   \centering
        \includegraphics[width=\textwidth]{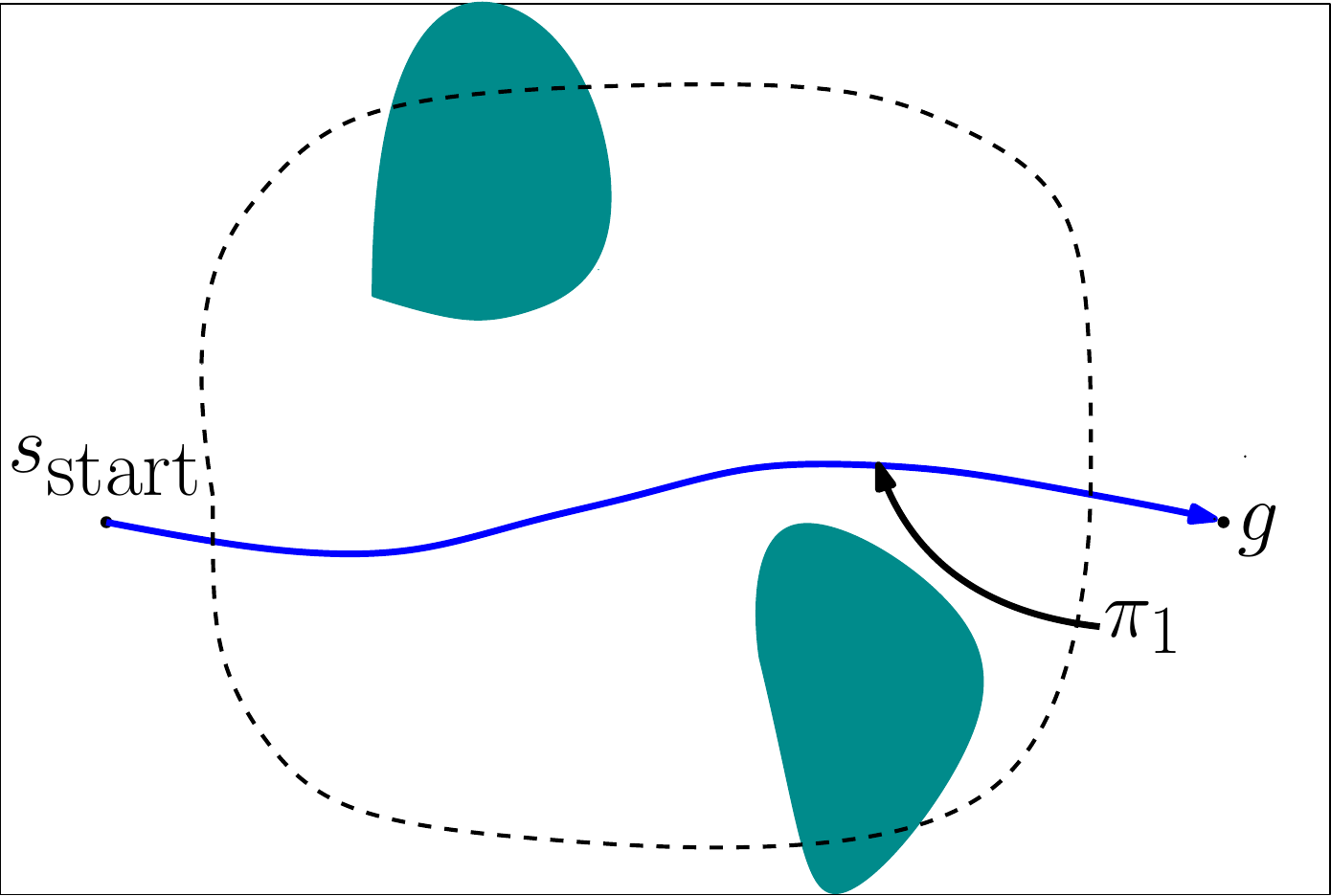}
        \caption{}
        \label{fig:p1}
    \end{subfigure} 
    \begin{subfigure}{0.24\textwidth}
    %   \centering
        \includegraphics[width=\textwidth]{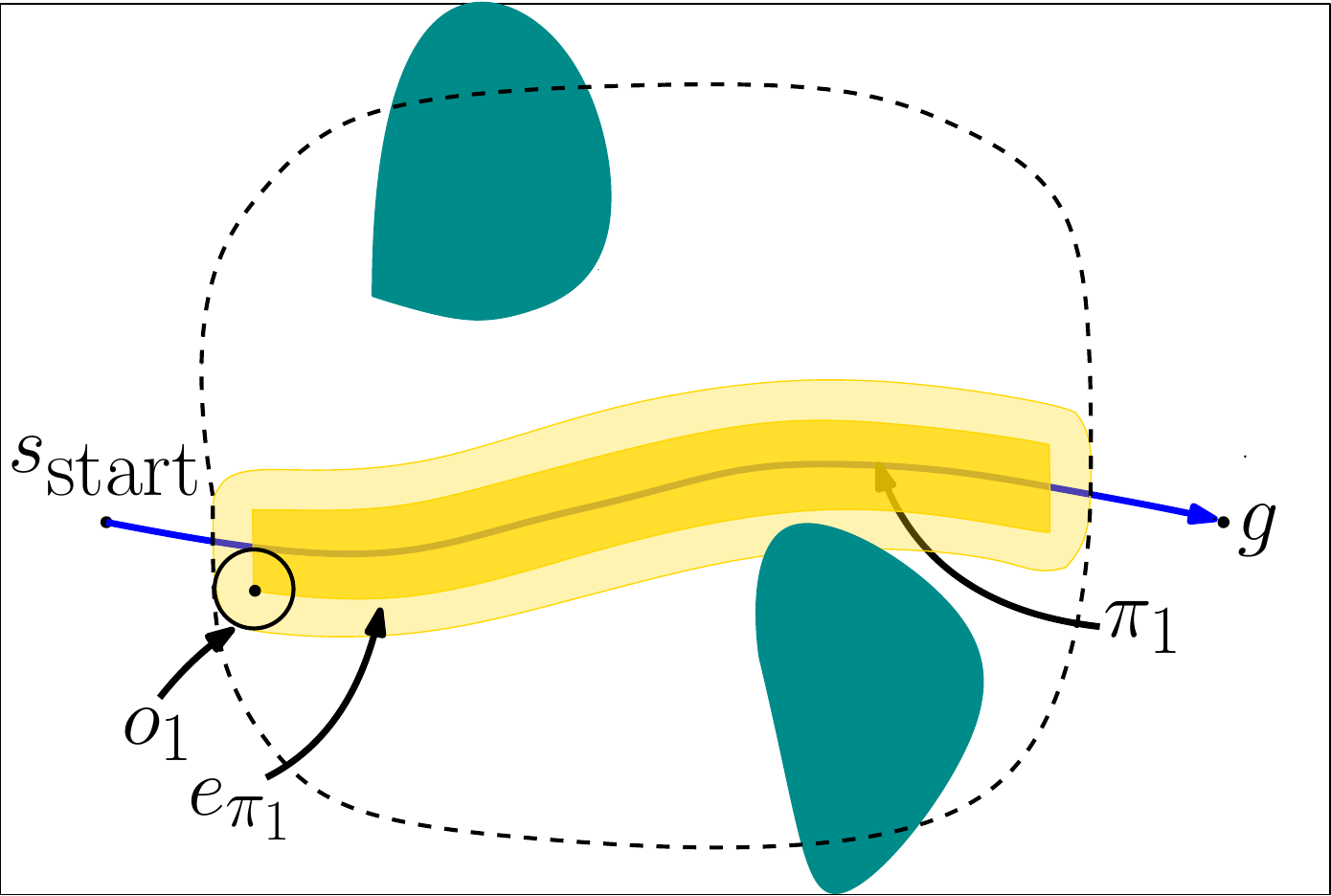}
        \caption{}
        \label{fig:p2}
    \end{subfigure}
    \begin{subfigure}{.24\textwidth}
    %   \centering
        \includegraphics[width=\textwidth]{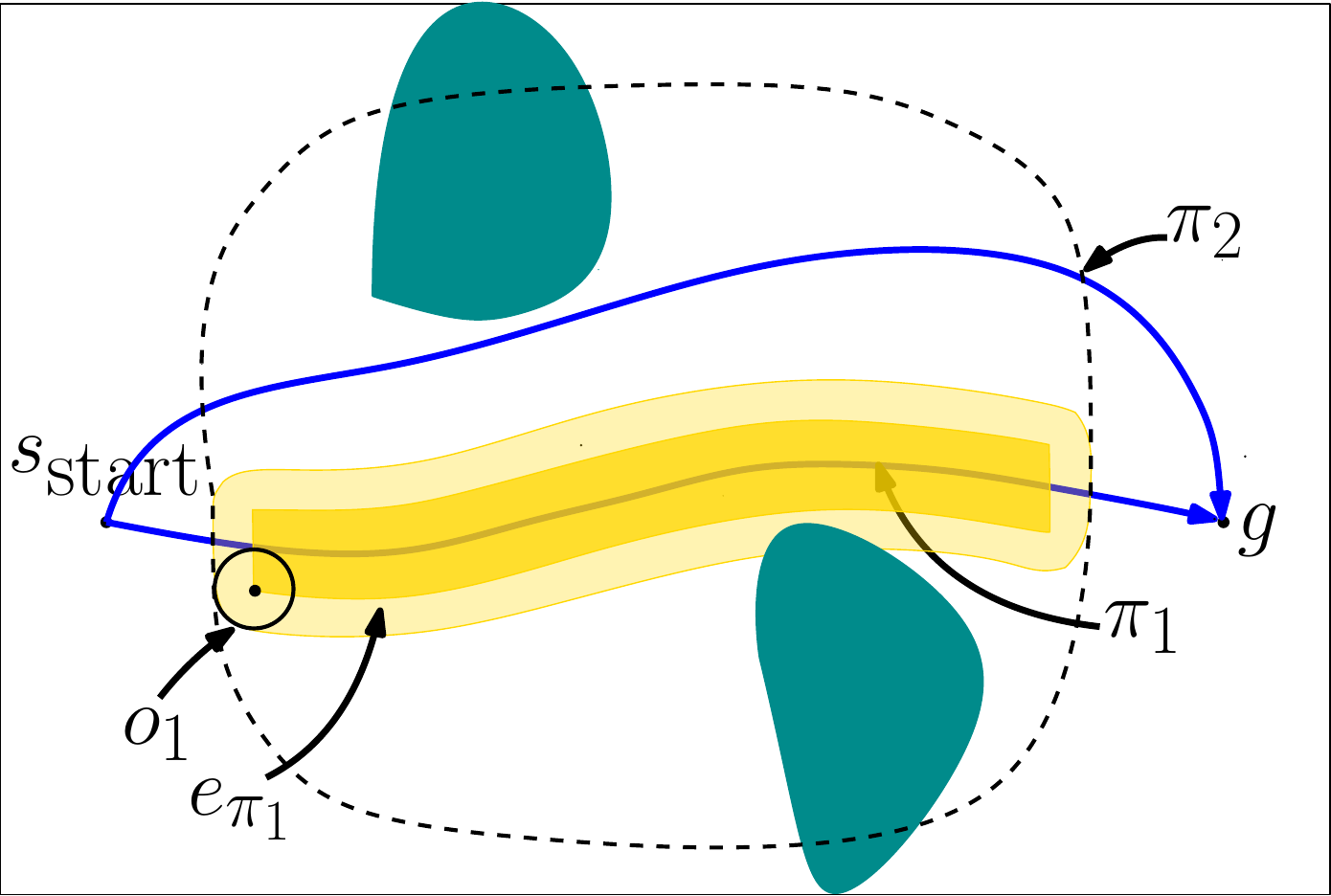}
        \caption{}
        \label{fig:p3}
    \end{subfigure}
    \begin{subfigure}{.24\textwidth}
    %   \centering
        \includegraphics[width=\textwidth]{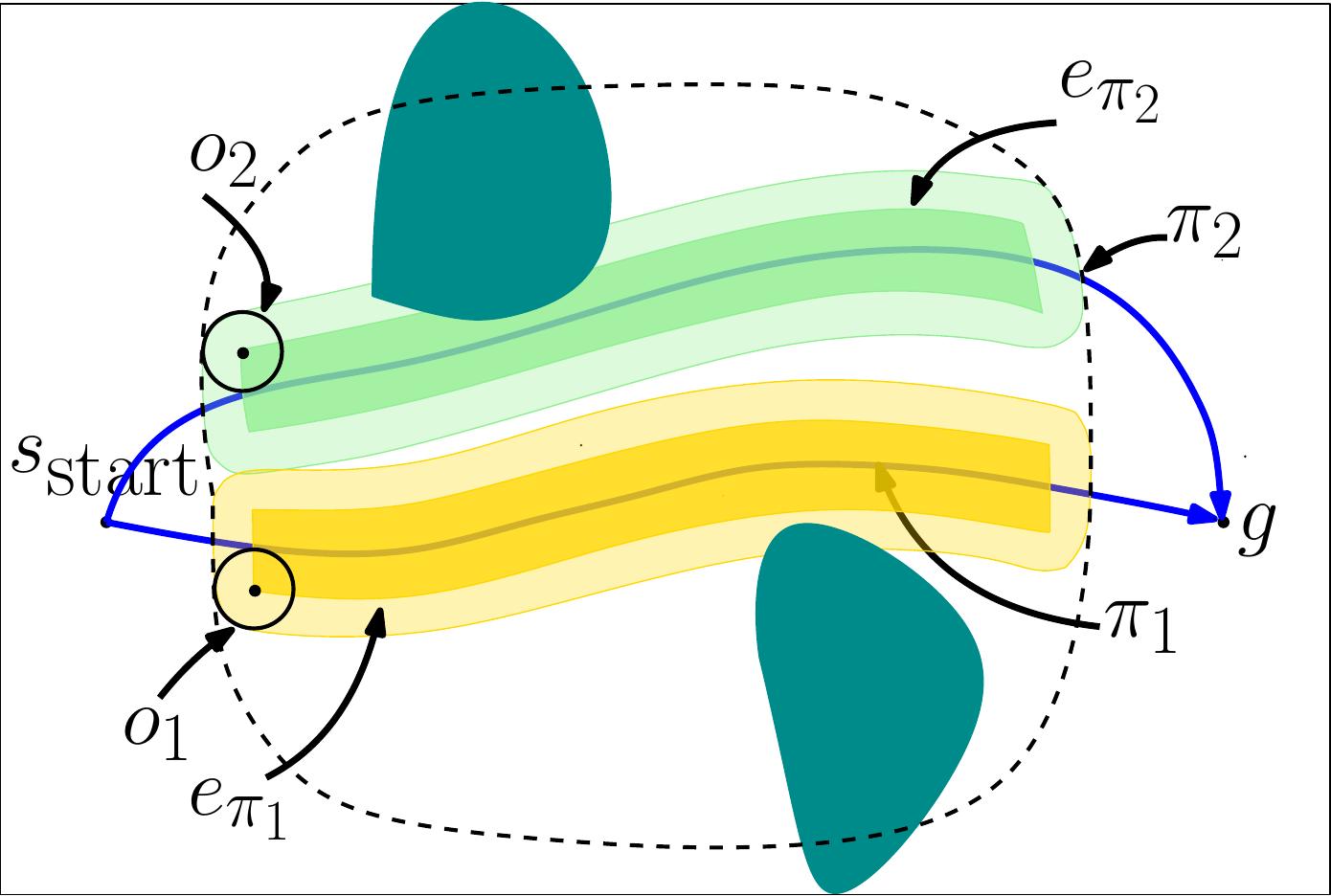}
        \caption{}
        \label{fig:p4}
    \end{subfigure}
        \begin{subfigure}{.24\textwidth}
    %   \centering
        \includegraphics[width=\textwidth]{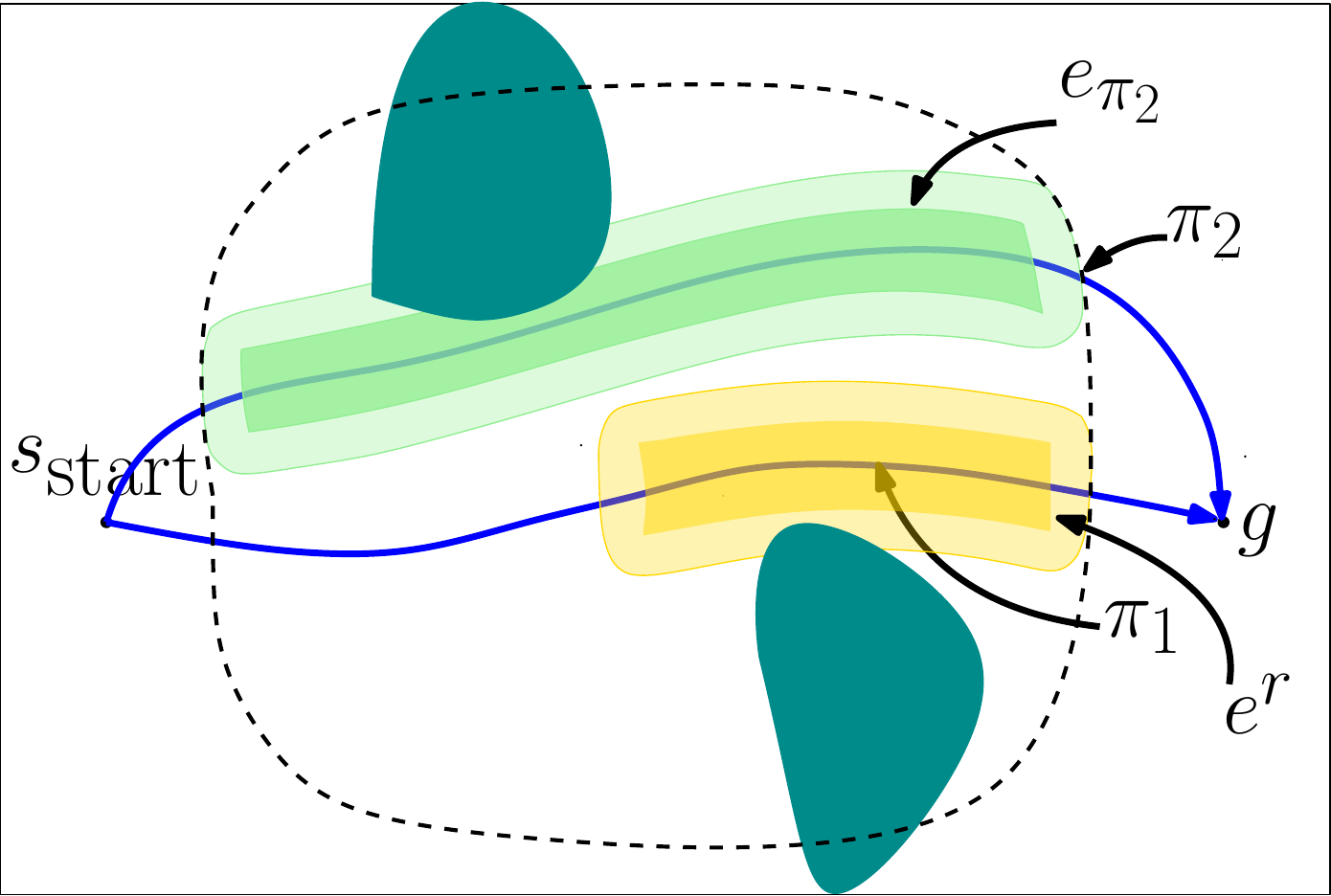}
        \caption{}
        \label{fig:p5}
    \end{subfigure}
        \begin{subfigure}{.24\textwidth}
    %   \centering
        \includegraphics[width=\textwidth]{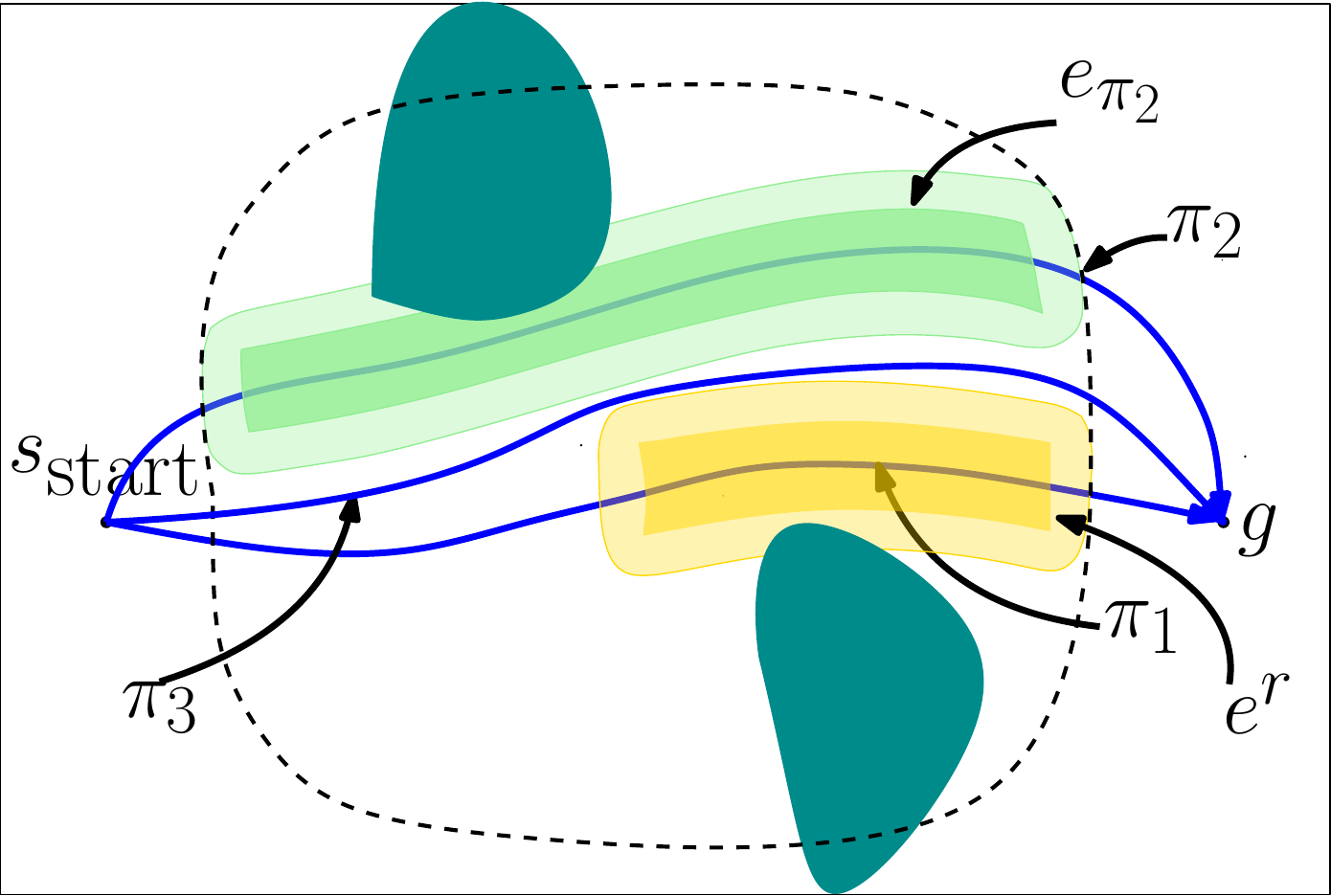}
        \caption{}
        \label{fig:p6}
    \end{subfigure}
        \begin{subfigure}{.24\textwidth}
    %   \centering
        \includegraphics[width=\textwidth]{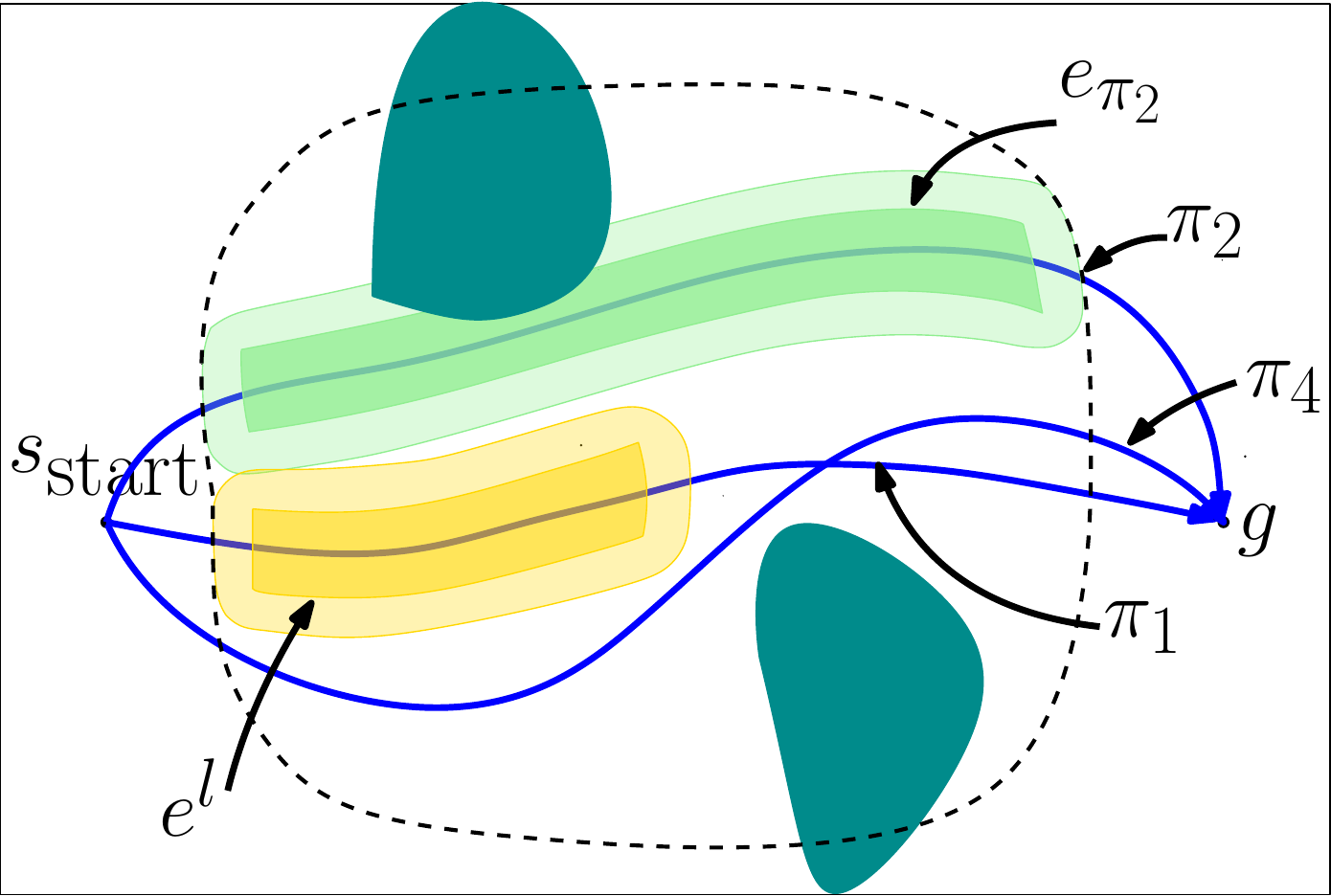}
        \caption{}
        \label{fig:p7}
    \end{subfigure}
        \begin{subfigure}{.24\textwidth}
    %   \centering
        \includegraphics[width=\textwidth]{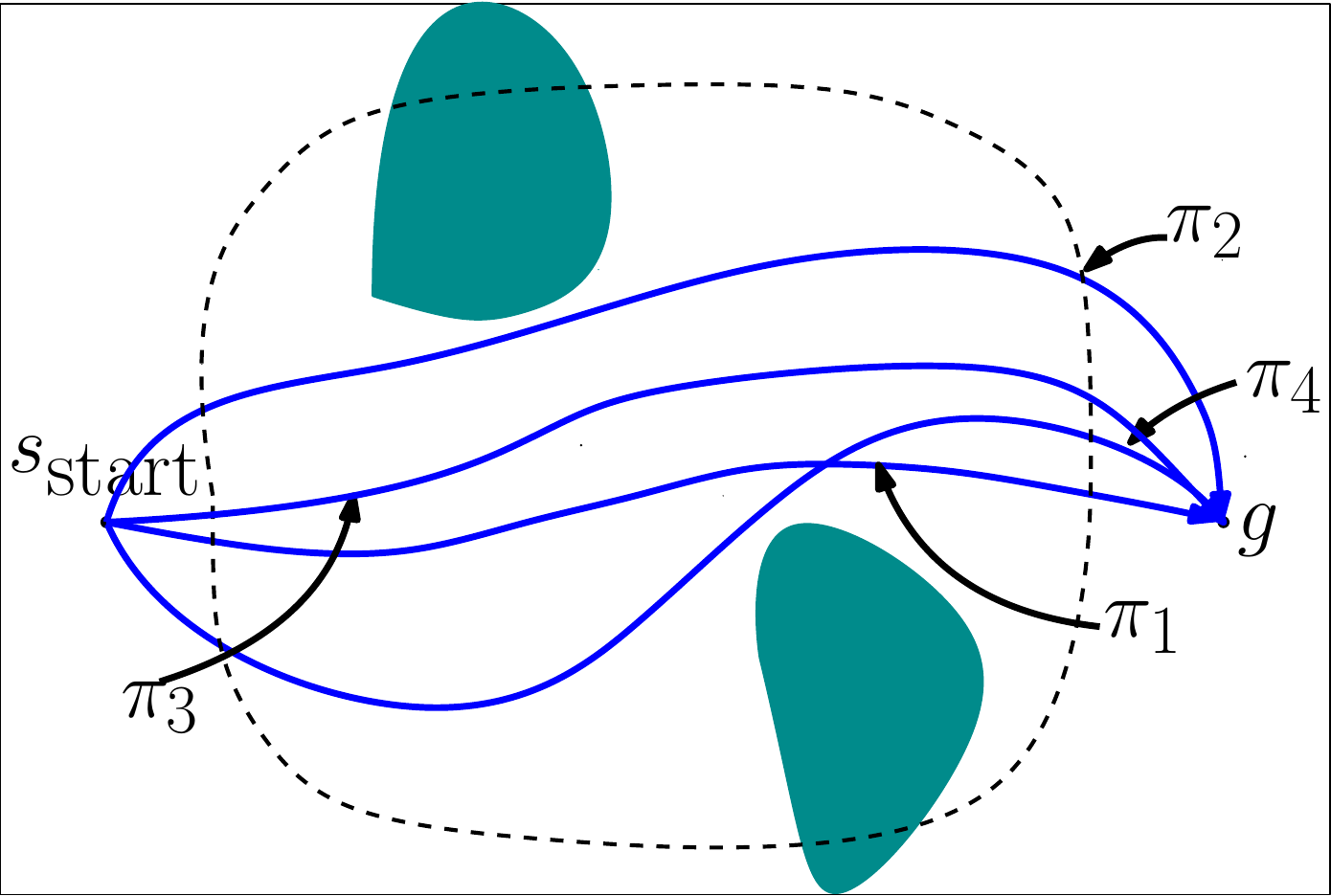}
        \caption{}
        \label{fig:p8}
    \end{subfigure}
    \caption{
    Illustration of preprocessing steps in a 2D environment with two movable obstacles $\{o_1,o_2\} \in \calO$ for a point robot. APP plans a set of paths from \Sstart to $g$ such that for any configuration of $o_1,o_2$ at least one of the paths from the set is collision free. Static obstacles $\calW_S$ are shown in green and $o_1,o_2$ are depicted as circles. The dotted region depicts $\calQ(o_i)$ (identical for $o_1,o_2$). Namely, $o_1,o_2$ are confined within the dotted region.
     (\subref{fig:p1})~APP finds the first path $\pi_1$ avoiding $\calW_S$ only
    (\subref{fig:p2})~Constructs envelope $e_{\pi_1}$ around $\pi_1$. The dark shaded region shows $e_{\pi_1}$ and the light region shows its occupancy.
    (\subref{fig:p3})~Finds second path $\pi_2$ avoiding $e_{\pi_2}$. Since APP successfully finds two disjoint paths at this step, these paths are sufficient to avoid any configuration of $o_1$.
    (\subref{fig:p4})~Constructs envelope $e_{\pi_2}$ around $\pi_2$.
    (\subref{fig:p5})~Attempts to find the third disjoint path but fails and bisects $e_{\pi_1}$ into $e^l$ and $e^r$.
    (\subref{fig:p6})~Finds path $\pi_3$ around $e^l$ and $e_{\pi_2}$
    (\subref{fig:p7})~Finds path $\pi_4$ around $e^r$ and $e_{\pi_2}$ 
    (\subref{fig:p8})~Set of paths needed to avoid any configuration of $o_1,o_2$.
    }

    \label{fig:illustration}
    % \vspace{-4mm}
\end{figure*}

\newlength{\textfloatsepsave}
\setlength{\textfloatsepsave}{\textfloatsep}
\setlength{\textfloatsep}{0pt}
\begin{algorithm}[t]
\caption{\textsc{Preprocess($g$)}} \label{alg1}
% \AlgFontSize

\hspace*{\algorithmicindent} \textbf{Inputs:} $\Sstart, \calW_S, \calO$

\hspace*{\algorithmicindent} \textbf{Output:} $\Pi_g$
\begin{algorithmic}[1]
\State $\pi_1 \leftarrow$ \textsc{FindPath}($\Sstart,g,\calW_S$)
\State $\calE_{\pi_1} \leftarrow \emptyset$
\State \textsc{SetAvoidedEnvelopesOfPath($\pi_1,\calE_{\pi_1}$)}
\State $\Pi_1 \leftarrow \{\pi_1\}$
% \State $\calE \leftarrow \emptyset$
\State $\Pi_g \leftarrow \Pi_1$
\For {$i = 1$ to $n$}  \Comment{$n = |\calO|$}  \label{alg1:l1}
    \State $\Pi_{i+1} \leftarrow \emptyset$
    \For{each $\pi_j \in \Pi_i$} \label{alg1:l2}
    \State $e_{\pi_j} \leftarrow$ \textsc{ConstructEnvelope($\pi_j$)}
    % \State $\calE \leftarrow \calE \cup \{e\}$
    % \State $\Pi_{i+1} \leftarrow$ 
    % \For{each $e \in \calE_i$}
    \State \textbf{If} $e_{\pi_j} = \emptyset$ \textbf{skip to next iteration}
        \State $\calE_{\pi_j} \leftarrow$ \textsc{GetAvoidedEnvelopesOfPath}($\pi_j$)
        \State $\calE_{\pi_j'} \leftarrow \calE_{\pi_j} \cup \{e_{\pi_j}\}$
        % \State $\pi_j' \leftarrow$ \textsc{FindPathAroundEnvelopes}($\calE_{\pi_j'}$)
        \State $\pi_j' \leftarrow$ \textsc{FindPath}($\Sstart,g,\calW_S,\calE_{\pi_j'}$) \Comment{Avoiding $\calE_{\pi_j'}$}
        \If{success}
            \State \textsc{SetAvoidedEnvelopesOfPath($\pi_j', \calE_{\pi_j'}$)}
            % \State \textbf{insert} $<e_{\pi_j}, \pi_j>$ in $\calD_g$
            \State $\Pi_{i+1} \leftarrow \Pi_{i+1} \cup \{\pi_j'\}$
        \Else
            \State $\Pi_{i+1} \leftarrow \Pi_{i+1} \cup \textsc{BisectAndFindMorePaths}(\calE_{\pi_j'})$ \label{alg1:bisect}
            % \State update $\calE_i$ in \calE
            % \State $\Pi \leftarrow \Pi \cup \Pi_{i+1}$
    \EndIf
    \EndFor
    \State $\Pi_g \leftarrow \Pi_g \cup \Pi_{i+1}$
\EndFor
% \State $\calD_g = \Pi_1 \cup \Pi_2 \cup ... \cup \Pi_{n+1}$
\end{algorithmic}
\end{algorithm}
\setlength{\textfloatsep}{\textfloatsepsave}

\setlength{\textfloatsepsave}{\textfloatsep}
\setlength{\textfloatsep}{0pt}
\begin{algorithm}[t]
\caption{\textsc{BisectAndFindMorePaths($\calE$)}} \label{alg2}
% \AlgFontSize
\begin{algorithmic}[1]
    \State $\Pi \leftarrow \emptyset$
    % \State $\calE_\pi \leftarrow$ \textsc{GetAvoidedEnvelopesOfPath}($\pi$)
    % \State $\calE_\pi \leftarrow \calE_\pi \cup \{e\}$
    \State $e \leftarrow \calE$.pop()     \Comment{pop an envelope to bisect}
    \label{alg2:pop}
    \If {\textsc{CheckSingleton}($e$)}    \Comment{contains one position}
        \State \textbf{return} $\emptyset$  \Comment{no further bisection possible}
    \EndIf
    \State $e^{l},e^{r} \leftarrow$ \textsc{BisectEnvelope($e$)} \label{alg:2:bisect}
    \State $\calE_{\pi_l} \leftarrow \calE \cup \{e^l\}$
    \State $\calE_{\pi_r} \leftarrow \calE \cup \{e^r\}$
    \For {each $\calE_{\pi_i} \in \{\calE_{\pi_l},\calE_{\pi_r}\}$}
    
        % first path
        % \State $\calE_{\pi_b} \leftarrow \calE \cup \{e^b\}$
        \State $\pi_i \leftarrow$ \textsc{FindPath}($\Sstart,g,\calW_S,\calE_{\pi_i'}$) \Comment{Avoiding $\calE_{\pi_i'}$}
        \If{success}
            \State \textsc{SetAvoidedEnvelopesOfPath($\pi_i, \calE_{\pi_i}$)}
            % \State \textbf{insert} $<e_{\pi_j}, \pi_j>$ in $\calD_g$
            \State $\Pi \leftarrow \Pi \cup \{\pi_i\}$
        \Else
            \State $\Pi \leftarrow \Pi \cup \textsc{BisectAndFindMorePaths}(\calE_{\pi_i})$
        \EndIf
    \EndFor
    % \State $\calE_s \leftarrow \calE_s \cup \{\tau_{a},\tau_{b}\}$
    \State \textbf{return} $\Pi$
    
    % \For {}
% \EndFor
\end{algorithmic}
\end{algorithm}
\setlength{\textfloatsep}{5pt}

\subsubsection{Query}
A query is comprised of a goal $g \in G$ (\Sstart is fixed) and obstacle configurations $q(o_1) \in Q(o_1), q(o_2) \in Q(o_2),..., q(o_n) \in Q(o_n)$. The query phase first looks up the datastructures stored for the queried $g$. It then loops over the set of paths $\Pi_g$ until it finds a collision free path. It is important to note that while doing so, it does not require any collision checking, which typically is the most computationally expensive operation in motion planning. Instead, it uses the envelope $e_{\pi_j}$ of each path $\pi_j$ to find the collision free path. The following data structures are stored for the query phase to efficiently return collision free paths.
\begin{center}
\begin{tabular}{l}
    $\calM : \; [G \rightarrow \{\calD_{g_1}, \calD_{g_2},...\}]$ \\
    $\calD_{g_i} : \; <\{e_{\pi_1},e_{\pi_2},...\} : \{\pi_1,\pi_2,...\}>$ \Comment{for each $g_i \in G$}\\
    % $\calM_e : Q(o_i) \rightarrow \{e_{\pi_1},e_{\pi_2},...\}$ \Comment{for each envelope}\\
\end{tabular}{}
\end{center}

Namely, $\calM$ is a lookup table that maps a goal~$g_i \in G$ to a dictionary~$\calD_{g_i}$ where ~$\calD_{g_i}$ contains $<e_{\pi_j},\pi_j>$ pairs where $\pi_j \in \Pi_{g_i}$. The query phase is a straight-forward two step process; in the first step, the dictionary~$\calD_{g_i}$ is fetched for the queried $g_i$ from $\calM$. In the second step, the algorithm loops through all the pairs in $\calD_{g_i}$ to search for an envelope $e_{\pi_j}$ that \emph{does not} contain any of the obstacle configurations i.e. $e_{\pi_j} | q(o_1),q(o_2),...,q(o_n) \notin e_{\pi_j}$ in the given planning query. By construction of the preprocessing algorithm, atleast one of the envelopes in $\calD_{g_i}$ is guaranteed to satisfy this condition.  To efficiently check whether a configuration is contained in an envelope or not, we store envelopes as hash tables, which allows the algorithm to perform this check in constant time.

\subsection{Theoretical Guarantees}
\begin{lemma}[Fixed-time query]
The query time of APP is upper-bounded by a (small) time limit $t_{\textrm {query}}$.
\label{lemma1}
\end{lemma}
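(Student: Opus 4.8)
The plan is to show that the query procedure executes only a fixed, finite number of constant-time operations, and then take $t_{\textrm{query}}$ to be the resulting finite bound maximized over the finite goal set $G$. First I would decompose the query into its two described steps: (i) fetching the dictionary $\calD_{g_i}$ for the queried goal from the lookup table $\calM$, and (ii) iterating over the $\langle e_{\pi_j},\pi_j\rangle$ pairs in $\calD_{g_i}$ until an envelope is found that excludes all queried obstacle configurations. I would bound the cost of each step separately and sum them.

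For step (i), since $G$ is a finite discretized goal set and $\calM$ is a lookup table keyed by $g \in G$, the fetch is a single table access whose cost is bounded by a constant $t_{\textrm{fetch}}$ independent of the online inputs. For step (ii), the number of pairs in $\calD_{g_i}$ equals $|\Pi_{g_i}|$, which is finite and fixed once the preprocessing of Alg.~\ref{alg1} terminates; I would write $P := \max_{g \in G} |\Pi_g|$, a constant determined entirely offline. For each of these at most $P$ pairs, the algorithm checks whether any of the $n$ queried configurations $q(o_1),\dots,q(o_n)$ lies in $e_{\pi_j}$. Because envelopes are stored as (perfect) hash tables, each membership test runs in constant time $t_{\textrm{hash}}$~\cite{czech1997perfect}, so the per-pair cost is at most $n\,t_{\textrm{hash}}$ and the whole loop costs at most $P\,n\,t_{\textrm{hash}}$.

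Crucially, no collision checking --- the expensive operation in motion planning --- is performed during the query; every operation is either a table access or a hash lookup. Combining the two steps yields $t_{\textrm{query}} := t_{\textrm{fetch}} + P\,n\,t_{\textrm{hash}}$, a constant independent of the particular goal and obstacle configuration supplied online. I would then invoke the construction guarantee established by the preprocessing algorithm --- that at least one envelope in $\calD_{g_i}$ excludes all obstacle configurations --- to conclude that the loop in step (ii) terminates having returned a collision-free path and never exceeds its $P$-iteration bound.

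The main obstacle I anticipate is not the arithmetic but justifying that $|\Pi_g|$ is genuinely finite, so that $P$ exists. When Alg.~\ref{alg1} resorts to envelope bisection, the recursion of Alg.~\ref{alg2} can spawn many paths, so I would argue that this recursion terminates: it halts at singleton envelopes via the \textsc{CheckSingleton} guard, and since each $Q(o_i)$ is a finite discretized set the bisection tree has bounded depth and produces only finitely many paths per goal. Granting this termination, $P$ is a maximum over the finite set $G$ of finite quantities and is well-defined, and the constant bound $t_{\textrm{query}}$ follows.
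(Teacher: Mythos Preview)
Your proposal is correct and follows essentially the same approach as the paper's sketch: decompose the query into the table lookup on $\calM$ and the nested loop over $\Pi_g$ and $\calO$, then invoke perfect hashing to make each membership test constant-time, arriving at the same $O(|\Pi_g|\cdot|\calO|)$ bound. You go slightly beyond the paper by explicitly arguing that $|\Pi_g|$ is finite via termination of the bisection recursion at singleton envelopes, a point the paper's sketch leaves implicit.
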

\begin{proof}[Sketch of Proof]
The query phase includes a lookup operation on \calM, running a loop over the set of alternative paths $\Pi_g$ to the queried $g$ and a nested loop through \calO which also only involves lookup operations since the envelopes are stored as sets. Since with perfect hashing (~\cite{czech1997perfect}), the lookup operations can be performed in constant time. The overall complexity thus is bounded and is given by $O(|\Pi_g|\cdot |\calO|)$ which bounds $t_{\textrm {query}}$.
\end{proof}

\textbf{Special case:} When all paths in $\Pi_g$ are disjoint, then we have that $|\Pi_g| \leq n+1$ and so the complexity of the query phase becomes $O(|\calO|^2)$

In practice we have that $|\Pi_g| \ll Q(o_1) \times Q(o_2) \times ... \times Q(o_n)$ (number of paths required for the naive approach) and therefore, $t_{\textrm {query}}$ is very small.

\begin{lemma}[Completeness]
If the offline motion planner \calP can find a collision-free path from \Sstart to a $g \in G$ in a (large) timeout $t_{\textrm preprocess}$, then APP is guaranteed to generate a path for the same planning problem in time $t_{\textrm {query}}$.
\end{lemma}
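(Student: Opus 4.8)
The plan is to split the statement into a retrieval part and a coverage part. The ``in time $t_{\textrm{query}}$'' part is immediate from Lemma~\ref{lemma1}: once we know that the database $\Pi_g$ built by Alg.~\ref{alg1} contains a path that is collision-free under the queried obstacle placement, the query phase returns it using only hash-table lookups, which is bounded by $t_{\textrm{query}}$. So the real work is to show that Alg.~\ref{alg1} terminates and that $\Pi_g$ covers every feasible query, i.e.\ that for each tuple $(q(o_1),\dots,q(o_n))$ admitting a collision-free path, some $\pi\in\Pi_g$ is collision-free. I read the hypothesis as asserting that $\calP$ is sound and complete within the budget $t_{\textrm{preprocess}}$: it returns a path on every feasible instance passed to \textsc{FindPath}.

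The lever is a geometric fact. If \textsc{FindPath} returns $\pi$ while avoiding the occupancy of an envelope set $\calE$, then by Def.~\ref{def:envp} we have $e_\pi\cap\bigcup\calE=\emptyset$; hence any obstacle tuple all of whose configurations lie in $\bigcup\calE$ cannot place a configuration in $e_\pi$, so $\pi$ is collision-free for that tuple (the \Sstart and $\epsilon$-near-goal exclusions in Def.~\ref{def:envp} contribute no collisions and are handled separately). This reduces covering a tuple to forcing all of its obstacle configurations into the avoided region of some stored path. Termination is then easy: each $Q(o_i)$ is finite, \textsc{BisectEnvelope} strictly halves a non-singleton envelope while \textsc{CheckSingleton} halts at singletons, so the binary-tree recursion of Alg.~\ref{alg2} has bounded depth, and the outer loop of Alg.~\ref{alg1} runs $n$ times.

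For coverage I would fix an arbitrary feasible tuple $T=(q_1,\dots,q_n)$ and follow the single generation/bisection branch that tracks $T$, maintaining the invariant that at generation $k$ either $T$ is already covered, or there is a stored path $\pi^{(k)}$ whose avoided union $U^{(k)}=\bigcup\calE_{\pi^{(k)}}$ contains the configurations of $k-1$ distinct obstacles. The inductive step rests on two points: a committed obstacle cannot block $\pi^{(k)}$ because its configuration lies in $U^{(k)}$, so if $T$ blocks $\pi^{(k)}$ the culprit is a fresh obstacle that becomes committed in the next generation (whose avoided set adjoins $e_{\pi^{(k)}}$); and whenever \textsc{FindPath} fails, the bisection of Alg.~\ref{alg2}, taken along the half retaining each committed configuration, refines the avoided envelopes without ever discarding a committed configuration. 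After at most $n$ commitments, generation $n+1$ (which the loop does produce as $\Pi_{n+1}$) holds a path whose avoided union contains all of $q_1,\dots,q_n$, and the geometric fact makes it collision-free for $T$.

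The step I expect to be hardest is ensuring the follow-$T$ branch never terminates in the \textbf{return}~$\emptyset$ case of Alg.~\ref{alg2}. Along this branch the relevant envelopes refine down to singletons equal to a subset of $T$'s own configurations, so the corresponding \textsc{FindPath} must avoid obstacles placed exactly at positions occupied under $T$ --- a feasible instance, since $T$ is feasible and avoiding fewer obstacles is no harder. Completeness of $\calP$ within $t_{\textrm{preprocess}}$ then guarantees success, so the dead-end is reached only for genuinely infeasible placements, which $T$ is not; this uses the reading that \textsc{pop} surrenders only when every remaining envelope is already a singleton, as the paper states. Assembling termination, the coverage invariant, and Lemma~\ref{lemma1} then yields that $\Pi_g$ contains a path collision-free for the query and returns it within $t_{\textrm{query}}$.
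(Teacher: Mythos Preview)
Your proposal is correct and considerably more detailed than the paper's own proof, which is only a sketch. Both arguments delegate the ``in time $t_{\textrm{query}}$'' claim to Lemma~\ref{lemma1}, so the comparison is on the coverage part.

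The paper splits into two cases. When the $n+1$ paths in $\Pi_g$ are fully disjoint, it uses a pigeonhole argument: by Def.~\ref{def:disjointness} the envelopes are pairwise disjoint, so each of the $n$ obstacle configurations can lie in at most one envelope, leaving at least one path unblocked. For the general (bisection) case the paper merely asserts that the recursion ``cover[s] all possible configurations of obstacles in \calO'' and that any instance solvable by \calP in $t_{\textrm{preprocess}}$ is therefore handled.

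You do not separate out the disjoint case; instead you give a single invariant-based argument that tracks a fixed feasible tuple $T$ through one branch of the generation/bisection tree, committing one fresh obstacle per generation and showing that after $n$ commitments some stored path's avoided union absorbs all of $T$. This is genuinely more work than the paper does: you supply the termination argument, the observation that along the $T$-branch each envelope carries exactly one committed configuration (so bisection never forces a choice between two committed points), and the feasibility argument that the $T$-branch cannot reach the \textbf{return}~$\emptyset$ case because at that point the avoided set is a subset of $T$'s own placements. The paper's pigeonhole gives a cleaner intuition for why $n+1$ disjoint paths suffice; your approach buys an actual proof of the general case that the paper only gestures at. Your explicit reliance on the ``pop the largest envelope'' policy (so that a singleton pop implies all envelopes are singletons) is honest --- the paper's Alg.~\ref{alg2} leaves the pop rule abstract, and the implementation section is what justifies your reading.
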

\begin{proof}[Sketch of Proof]
For the special case when all paths in $\Pi_g$ are disjoint, the proof follows from Def.~\ref{def:disjointness} because, since the envelopes of paths in $\Pi_g$ are disjoint sets, no obstacle configuration $q(o_i)$ can intersect with more than one path in $\Pi_g$. Since the algorithm computes $n+1$ paths if required for $n$ obstacles, at least one path is collision free.

For the general case where not all paths in $\Pi_g$ are necessarily disjoint, the lemma can be proven by assessing the recursive bisection procedure in APP. In the Alg.~\ref{alg1}, \calP attempts to find the ($n+1$)th path which avoids $n$ envelopes and bisects one of the $n$ envelopes if \calP fails within $t_{\textrm {preprocess}}$. The bisection occurs recursively until either the path is computed for every leaf envelope or further recursion is not possible, thus covering all possible configurations of obstacles in \calO. Since APP uses \calP with the timeout $t_{\textrm {preprocess}}$ to solve each problem, it guarantees to have precomputed a valid path for any problem that \calP can solve in $t_{\textrm {preprocess}}$.
Furthermore from Lemma~\ref{lemma1}, we have that the query time is guaranteed to be within $t_{\textrm {query}}$.
\end{proof}

\section{Implementation Details} % : Motion Planning for Robot Arm}
%In this section we cover two main components of our approach specifically for the motion planning problem for a high-DoF robot arm, (1) envelope construction and (2) envelope bisection.
In this section we cover three main components of our implementation for the motion planning problem of a high-DoF robot arm, (1) envelope construction, (2) envelope bisection and (3) computing envelope occupancy.
Our implementation approximates the geometry of the movable obstacles with spheres. This approximation restricts the dimensionality of $Q(o_i)$ to $\mathbb{R}^3$.

\subsection{Envelope Computation using Distance Field}
\label{sec:impl_details}
The naive approach of computing an envelope $e_\pi$ around a path $\pi$ (recall Def.~\ref{def:envp}) requires collision checking all obstacle configurations with each robot state $s \in \pi$. This can be very expensive for a large number of configurations that the obstacles can have.
% To optimize this computation, we approximate an obstacle $o_i$ with a sphere of radius $r_i$, which allows us to efficiently perform this computation using a distance field.
% Recall Def.~\ref{def:envp} of the envelope. The naive approach of constructing an envelope $e_\pi$ for a path $\pi$ is to make use of a collision checker for the robot and find the set of all configurations of all obstacles $o_i \in \calO$ that collide with any configuration of the robot along~$\pi$. This would require~$|\pi| \times Q(o_1) \times Q(o_2) \times ... \times Q(o_n)$ collision checks, where $|\pi|$ is the length of $\pi$. This clearly is prohibitive to compute, even offline.
%
% To optimize for the preprocessing computation, we make the following simplistic assumption.
%
% % \begin{assumption}
% % \label{assum:1}
% % We assume that all obstacles $o_i \in \calO$ are identical in geometry.
% % \end{assumption}
%
% % For the naive approach, this now requires~$|\pi| \times Q(o_i)$ collision checks which still is not affordable. To this end we make another assumption~\ref{assum:2}.
%
% \begin{assumption}
% \label{assum:2}
% We approximate the geometry of each obstacles $o_i \in \calO$ by a sphere of radius $r_i$.
% \end{assumption}
%
% This allows us to use a method for envelope construction that does not require the expensive collision checking. With the spherical approximation, we now refer to obstacle configurations as positions in the 3D world.
%
Instead, the envelope is constructed in a simple two step process.
%\begin{enumerate}
First, the path $\pi$ is voxelized (using the collision model of the robot) and added to an occupancy grid.
% This is done by converting the collision model of the robot at each configuration $s \in \pi$ to voxels and inserting them in the occupancy grid.
Second, 
to compute the positions of each $o_i \in \calO$ that collide with $\pi$, the occupancy grid is inflated by the radius~$r_i$ of $o_i$, (approximating an obstacle $o_i$ with a sphere of radius $r_i$) which is efficiently done by using a signed distance field.
%The method iterates through all cells in the occupancy grid and fills the ones which are at most~$r_i$ distance away from the occupied cells.
%
The intersection of the set of discrete positions of the occupied cells with $Q(o_i)$, constitutes $e_{\pi}$.
%\end{enumerate}
%
To allow computing disjoint paths, following Def.~\ref{def:envp}, the obstacle positions that collide with \Sstart or are within a small distance $\epsilon$ from the goal position are excluded from $e_\pi$. This enforces an assumption that an obstacle is not placed within $\epsilon$ distance from the goal position.
% By the end of this two step process, the cells that are occupied constitutes~$e_\pi$, i.e. the discrete obstacle positions corresponding to each of these cells, collide with $\pi$.
% (since we assume the same geometrical approximation for all $o \in \calO$)

\subsection{Envelope Bisection}
The envelope bisection refers to splitting an envelope~$e$ into two sub-envelopes~$e_l$ and $e_r$ (see Alg.~\ref{alg2} line~\ref{alg:2:bisect}). Several schemes could be used to make this split. In our implementation, we split the envelope along either of the three planes $x=x_c, y=y_c$ or $z=z_c$. where the $x_c,y_c$ and $z_c$ are the means of the $x,y$ and $z$ components respectively of all positions in the envelope $e$ to be bisected. Among the three axes, we pick the axis that has the largest span of positions. As described earlier, each envelope is implemented as a binary tree and this bisection results in creation of two children~$e_l$ and~$e_r$ for the parent~$e$ in this binary tree. Note that we remove subscripts for paths once the envelope is bisected since the bisected envelopes no more follow Def.~\ref{def:envp}.
Alg.~\ref{alg2} also makes a choice at line~\ref{alg2:pop} for the envelope to be bisected from the input set of envelopes \calE. While this decision does not affect the properties of the algorithm, different heuristics can be used for this choice as well. In our implementation, we simply select the largest envelope for bisection.

\subsection{Computing Envelope Occupancy}
This step corresponds to an implementation detail within procedure \textsc{FindPath} in Alg.~\ref{alg1}. In order to find a path around the set of envelopes \calE, the joint occupancy of all envelopes in \calE is computed. To do so, for each obstacle $o_i$, all of its discrete positions in \calE are added to an empty occupancy grid, followed by the inflation of the occupancy grid (as described earlier) by the radius $r_i$ of the corresponding obstacle $o_i$. The joint occupancy of all these grids constitutes the occupancy of \calE.
\footnote{In our experiments, we consider all obstacles of the same sizes. In that case, a single inflation operation is needed, each for the envelope construction and computing envelope occupancy steps.}

\begin{figure*}[t]
    \centering
    \begin{subfigure}{0.24\textwidth}
    %   \centering
        \includegraphics[trim={8cm 0 10cm 1cm},clip, width=\textwidth]{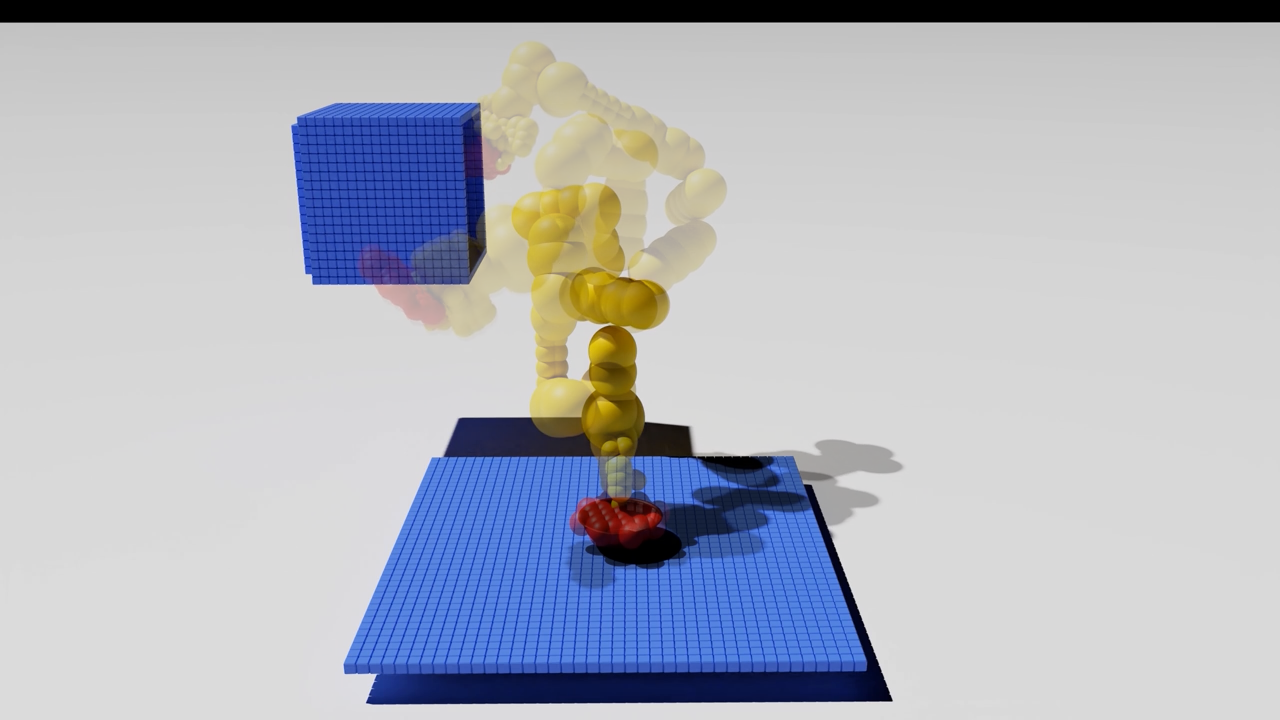}
        \caption{}
        \label{fig:pr1}
    \end{subfigure} 
    \begin{subfigure}{0.24\textwidth}
    %   \centering
        \includegraphics[trim={8cm 0 10cm 1cm},clip, width=\textwidth]{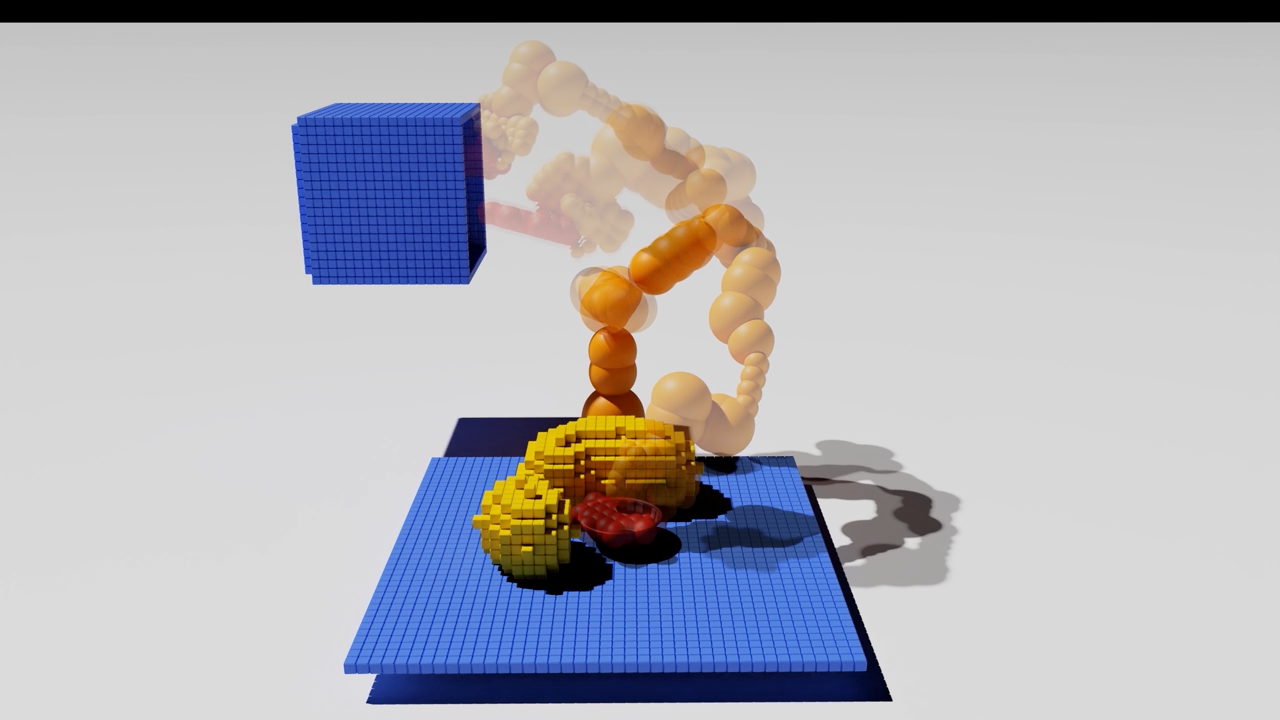}
        \caption{}
        \label{fig:pr2}
    \end{subfigure}
    \begin{subfigure}{.24\textwidth}
    %   \centering
        \includegraphics[trim={8cm 0 10cm 1cm},clip, width=\textwidth]{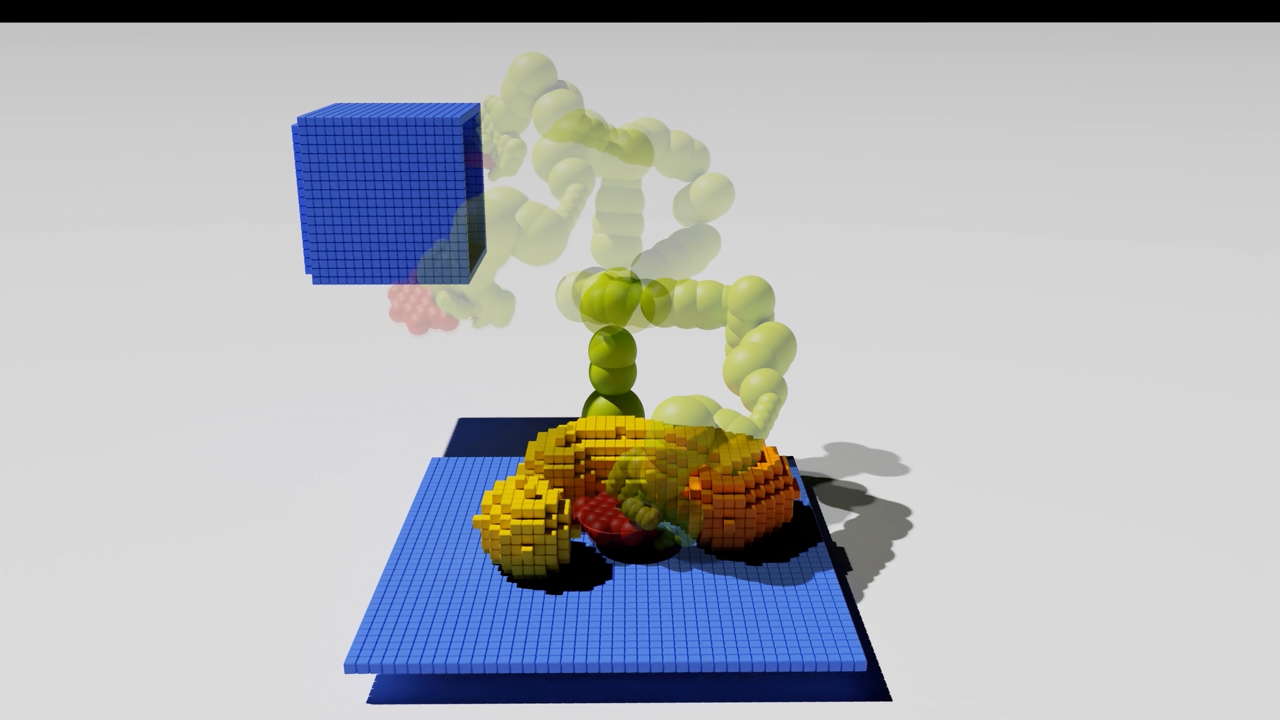}
        \caption{}
        \label{fig:pr3}
    \end{subfigure}
    \begin{subfigure}{.24\textwidth}
    %   \centering
        \includegraphics[trim={8cm 0 10cm 1cm},clip, width=\textwidth]{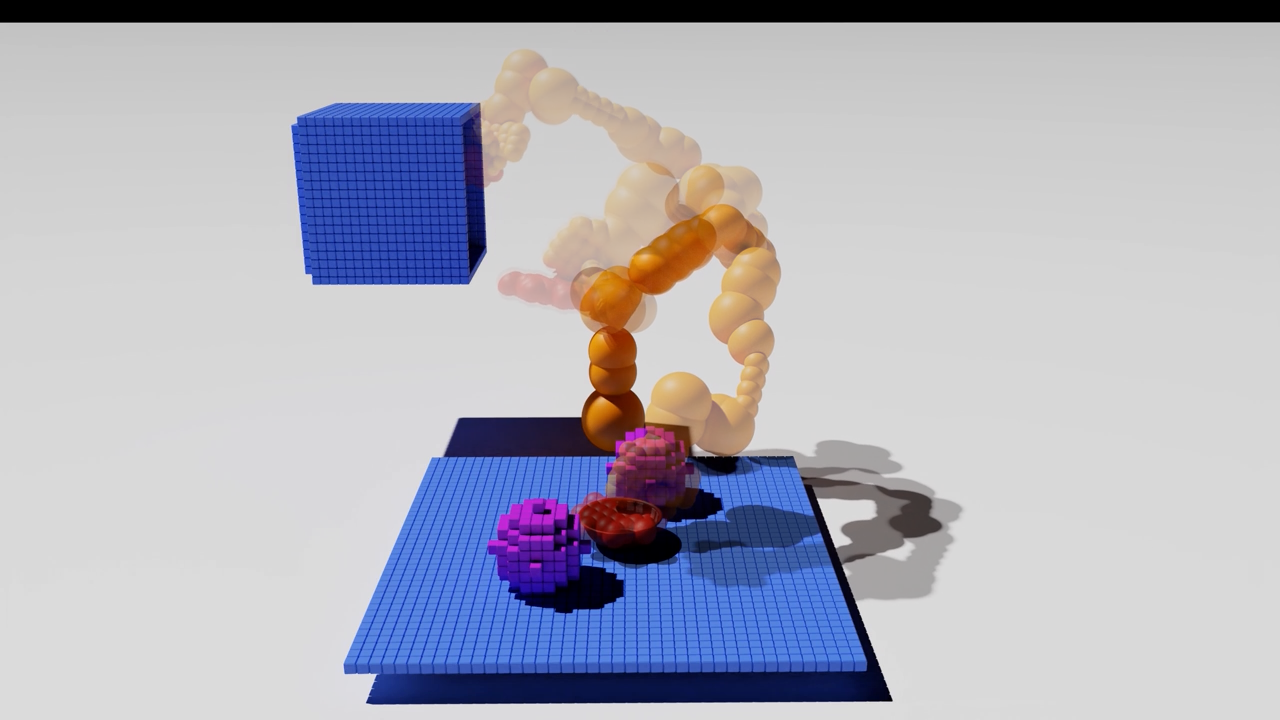}
        \caption{}
        \label{fig:prq}
    \end{subfigure}
    \caption{
    Illustration of the preprocessing steps for the mail-sorting environment with two movable obstacles on a tabletop for a single start and goal pair.
     (\subref{fig:pr1})~APP finds the first path avoiding $\calW_S$ only.
    (\subref{fig:pr2})~Constructs envelope around first path and finds the second path avoiding the first envelope.
    (\subref{fig:pr3})~Constructs a second envelope around the second path and finds the third path avoiding the two envelopes.
    (\subref{fig:prq})~At query time, given a random configuration of movable obstacles, APP looks up a valid path~(second path, Fig.~\subref{fig:pr2}) among the set of precomputed paths.
    }
    \label{fig:pr}
    % \vspace{-4mm}
\end{figure*}

\section{Experiments}
\label{sec:experiments}
\begin{table*}[tbh!]
\footnotesize
\begin{center}
\begin{tabular}{|r|r||c|c|c|c|c|c|c|c|c|}
\hline
\multicolumn{2}{|l|}{} & \multicolumn{3}{c|}{Shelving} & \multicolumn{3}{c|}{Sorting} & \multicolumn{3}{c|}{Sorting (time constrained)} \\
\multicolumn{2}{|l|}{} & \multicolumn{3}{c|}{\scriptsize{[248 goals]}} & \multicolumn{3}{c|}{\scriptsize{[680 goals]}} & \multicolumn{3}{c|}{\scriptsize{[680 goals]}} \\
\multicolumn{2}{|r|}{No. of Obstacles} & 1 & 2 & 3 & 1 & 2 & 3 & 1 & 2 & 3 \\ \hline\hline
%%%%%%%%%%%%%%%%%%%  Success Rates
\multirow{7}{*}{\rotatebox[origin=c]{90}{Success Rate [\%]}}
& \textbf{APP}  & \textbf{100}  & \textbf{100}  & \textbf{100}  & \textbf{100}  & \textbf{100}  & \textbf{100}  & \textbf{100}  & \textbf{100}  & \textbf{100} \\ \cline{2-11}
& Lightning~\cite{berenson2012robot}  & 94  & 95  & 97  & \textbf{100}  & \textbf{100}  & \textbf{100}  & 97  & 96  & 75 \\ \cline{2-11}
& E-Graphs~\cite{Phillips-RSS-12}  & 20  & 15  & 14  & 89  & 89  & 81  & 50  & 45  & 41 \\ \cline{2-11}
& RRTConnect~\cite{kuffner2000rrt}  & 96  & 95  & 97  & \textbf{100}  & \textbf{100}  & \textbf{100}  & 19  & 14  & 11 \\ \cline{2-11}
& LazyPRM~\cite{kavraki2000path}  & 46  & 34  & 23  & 98  & 99  & 93  & 46  & 35  & 28 \\ \cline{2-11}
& BIT\textsuperscript{\textasteriskcentered}~\cite{gammell2020batch}  & 92  & 75  & 64  & 84  & 81  & 75  & 37  & 42  & 40 \\ \cline{2-11}
% & InformedRRT\textsuperscript{\textasteriskcentered}~\cite{gammell2018informed}  & 6  & 11  & 12  & \textbf{100}  & \textbf{100}  & \textbf{100}  & 25  & 23  & 16 \\ \cline{2-11}
& RRT\textsuperscript{\textasteriskcentered}~\cite{karaman2011sampling}  & 9  & 12  & 17  & \textbf{100}  & \textbf{100}  & \textbf{100}  & 20  & 15  & 25 \\ \hline\hline
%%%%%%%%%%%%%%%%%%%  Planning Times
\multirow{14}{*}{\rotatebox[origin=c]{90}{Mean Planning Time (std, max) [ms]}}
& \multirow{2}{*}{\textbf{APP}}  & \textbf{.005}  & \textbf{.006}  & \textbf{.006}  & \textbf{.005}  & \textbf{.005}  & \textbf{.007}  & \textbf{.005}  & \textbf{.005}  & \textbf{.006} \\
&  & \scriptsize{(.001, .009)} & \scriptsize{(.001, .013)} & \scriptsize{(.001, .014)} & \scriptsize{(.001, .008)} & \scriptsize{(.001, .009)} & \scriptsize{(.001, .011)} & \scriptsize{(.001, .009)} & \scriptsize{(.001, .009)} & \scriptsize{(.001, .008)}\\ \cline{2-11}
& \multirow{2}{*}{Lightning~\cite{berenson2012robot}}  & 124  & 149  & 162  & 31.1  & 37.5  & 48.6  & 70.7  & 77.5  & 98.4 \\
&  & \scriptsize{(28.2, 221)} & \scriptsize{(63.3, 413)} & \scriptsize{(82.2, 552)} & \scriptsize{(13.7, 76.8)} & \scriptsize{(20.0, 96.1)} & \scriptsize{(30.2, 144)} & \scriptsize{(80.3, 542)} & \scriptsize{(63.0, 307)} & \scriptsize{(84.6, 414)}\\ \cline{2-11}
& \multirow{2}{*}{E-Graphs~\cite{Phillips-RSS-12}}  & 988  & 1044  & 904  & 203  & 273  & 275  & 162  & 210  & 176 \\
&  & \scriptsize{(587, 1943)} & \scriptsize{(484, 1952)} & \scriptsize{(364, 1520)} & \scriptsize{(92.0, 824)} & \scriptsize{(290, 1936)} & \scriptsize{(250, 1918)} & \scriptsize{(55.2, 534)} & \scriptsize{(78.1, 575)} & \scriptsize{(78.6, 567)}\\ \cline{2-11}
& \multirow{2}{*}{RRTConnect~\cite{kuffner2000rrt}}  & 168  & 167  & 260  & 62.0  & 59.5  & 72.9  & 112  & 110  & 104 \\
&  & \scriptsize{(192, 1880)} & \scriptsize{(71.7, 488)} & \scriptsize{(228, 1363)} & \scriptsize{(46.7, 364)} & \scriptsize{(29.2, 151)} & \scriptsize{(38.1, 194)} & \scriptsize{(75.0, 311)} & \scriptsize{(71.6, 272)} & \scriptsize{(72.4, 279)}\\ \cline{2-11}
& \multirow{2}{*}{LazyPRM~\cite{kavraki2000path}}  & 1014  & 1204  & 1196  & 338  & 480  & 442  & 219  & 219  & 224 \\
&  & \scriptsize{(388, 1921)} & \scriptsize{(410, 1954)} & \scriptsize{(516, 1967)} & \scriptsize{(265, 1454)} & \scriptsize{(402, 1716)} & \scriptsize{(405, 1919)} & \scriptsize{(65.8, 386)} & \scriptsize{(72.6, 478)} & \scriptsize{(93.1, 568)}\\ \cline{2-11}
& \multirow{2}{*}{BIT\textsuperscript{\textasteriskcentered}~\cite{gammell2020batch}}  & 727  & 904  & 833  & 327  & 298  & 442  & 251  & 262  & 252 \\
&  & \scriptsize{(256, 1997)} & \scriptsize{(380, 1950)} & \scriptsize{(363, 1923)} & \scriptsize{(357, 1963)} & \scriptsize{(305, 1555)} & \scriptsize{(417, 1728)} & \scriptsize{(102, 556)} & \scriptsize{(92.9, 540)} & \scriptsize{(109, 633)}\\ \cline{2-11}
% & \multirow{2}{*}{InformedRRT\textsuperscript{\textasteriskcentered}~\cite{gammell2018informed}}  & 515  & 546  & 562  & 136  & 136  & 139  & 178  & 190  & 171 \\
% &  & \scriptsize{(76.4, 617)} & \scriptsize{(69.8, 634)} & \scriptsize{(28.6, 603)} & \scriptsize{(31.6, 201)} & \scriptsize{(31.5, 203)} & \scriptsize{(33.1, 209)} & \scriptsize{(44.0, 267)} & \scriptsize{(50.4, 323)} & \scriptsize{(43.4, 253)}\\ \cline{2-11}
& \multirow{2}{*}{RRT\textsuperscript{\textasteriskcentered}~\cite{karaman2011sampling}}  & 572  & 539  & 566  & 137  & 137  & 137  & 193  & 187  & 193 \\
&  & \scriptsize{(128, 893)} & \scriptsize{(54.6, 616)} & \scriptsize{(29.0, 617)} & \scriptsize{(31.9, 196)} & \scriptsize{(31.7, 199)} & \scriptsize{(32.4, 214)} & \scriptsize{(50.3, 289)} & \scriptsize{(36.7, 261)} & \scriptsize{(50.5, 290)}\\ \hline\hline
%%%%%%%%%%%%%%%%%%%  APP Preprocessing
\multirow{6}{*}{\rotatebox[origin=c]{90}{APP}}
& Preprocessing & \multirow{2}{*}{8.9} & \multirow{2}{*}{48.2} & \multirow{2}{*}{49.9} & \multirow{2}{*}{17.6} & \multirow{2}{*}{31.2} & \multirow{2}{*}{74.6} & \multirow{2}{*}{17.5} & \multirow{2}{*}{30.4} & \multirow{2}{*}{98.8} \\
& Time [min] & & & & & & & & & \\ \cline{2-11}
& Paths per Goal & \multirow{2}{*}{2.0 ({0.1})} & \multirow{2}{*}{3.2 ({0.7})} & \multirow{2}{*}{4.1 ({0.9})} & \multirow{2}{*}{2.0 ({0.0})} & \multirow{2}{*}{3.1 ({0.5})} & \multirow{2}{*}{6.1 ({4.0})} & \multirow{2}{*}{2.0 ({0.1})} & \multirow{2}{*}{3.1 ({0.4})} & \multirow{2}{*}{5.8 ({7.7})} \\
& (mean, std)& & & & & & & & & \\ \cline{2-11}
& \multirow{2}{*}{Memory [Mb]} & \multirow{2}{*}{9.7} & \multirow{2}{*}{15.6} & \multirow{2}{*}{20.6} & \multirow{2}{*}{24.8} & \multirow{2}{*}{37.8} & \multirow{2}{*}{70.6} & \multirow{2}{*}{17.5} & \multirow{2}{*}{28.5} & \multirow{2}{*}{54.7} \\
& & & & & & & & & & \\ \hline
\end{tabular}
\end{center}
\caption{
The table shows the success rates and planning times of APP and the baselines, as well as the preprocessing statistics for three different domains, with 1, 2 and 3 movable obstacles. 100 random tests were run for each experiment with the timeout of 2s. The same timeout was used for APP during preprocessing.
}
\label{table:results}
% \vskip -0.5cm
\end{table*}

\begin{figure}
\centering
\includegraphics[width=0.48\textwidth]{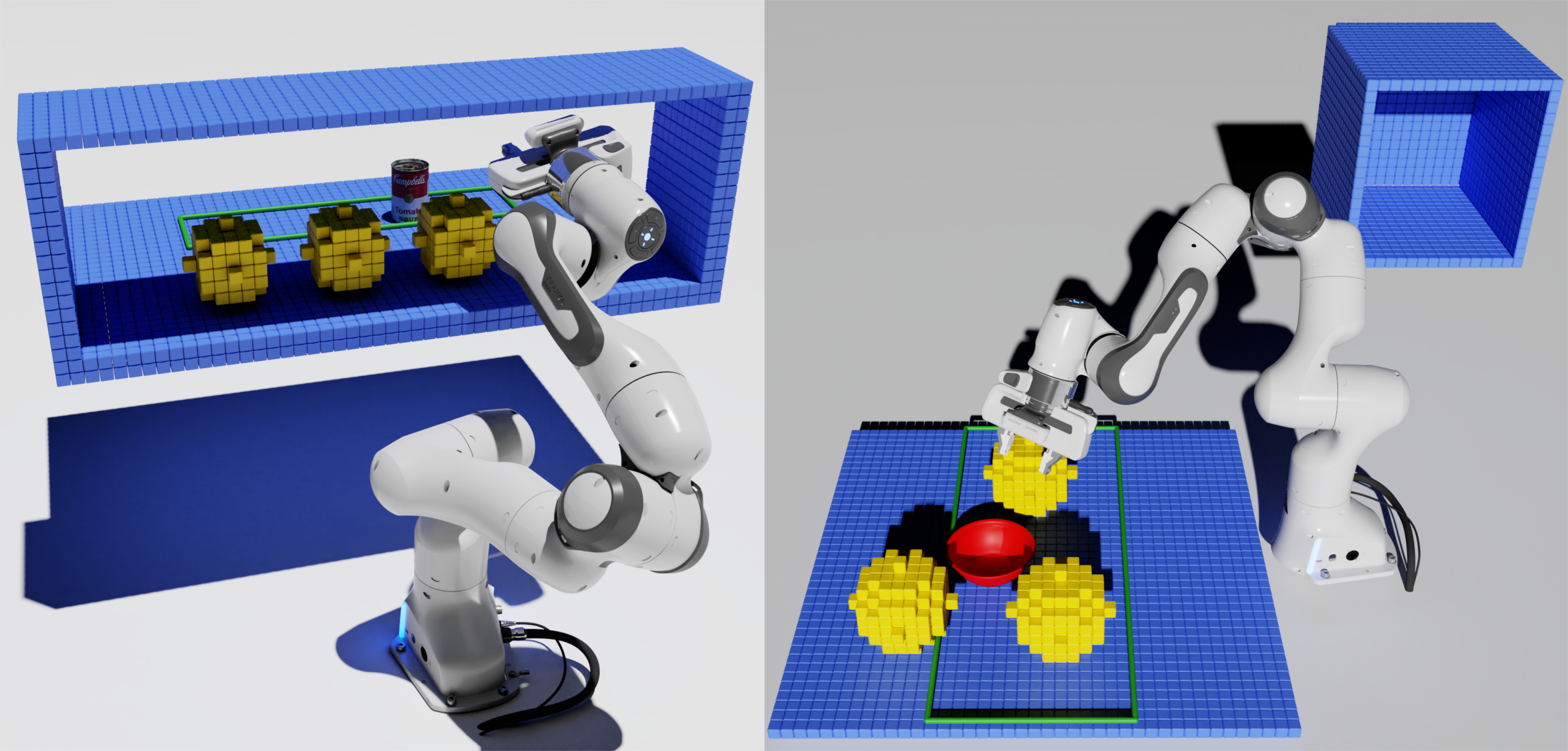}
\caption{Left: Shelving/unshelving task. Right: mail sorting task. Both tasks involve a mix of static obstacles and obstacles which may appear at many different positions. Target object is shown in red, and approximate movable obstacles are shown in yellow. The goal regions are depicted with green rectangles.}
\label{fig:scenarios}
% \vskip -0.5cm
\end{figure}

We evaluated APP on the 7 DoF Franka arm in three different semi-structured domains and compared its performance with other state-of-the-art sampling-based and search-based motion planning algorithms. For sampling-based algorithms we used OMPL's implementations~\cite{sucan2012open}

%\subsection{Experimental setup}
%\subsubsection*{Manipulation Planning}
%We consider the manipulation planning problem as joint motion and grasp planning problem. Namely,
We consider the problem of reaching a feasible grasp pose $\phi \in \Phi$ for a target object $o_T$ at configuration $q(o_T)$, where $\Phi$ is a set of precomputed grasps (transformed into the robot's frame) for $o_T$. The resultant problem is a multi-goal planning problem where given $\Phi$, the robot has to find a collision free path $\pi$ to any $\phi \in \Phi$ while avoiding obstacles in \calW and meeting the success criterion. We use YCB objects~\cite{calli2015ycb} as targets with the grasps from~\cite{eppner2019billion}.

In all our experiments, we sort the grasp poses using a distance metric selecting the next best grasp being the one closest to the grasps already attempted so far. We use collision-aware IK to find valid grasps. For the baselines that support multi-goal planning (RRT*~\cite{karaman2011sampling}, BIT*~\cite{gammell2020batch}, LazyPRM~\cite{kavraki2000path}) in OMPL, the planner picks the top five grasps from the sorted list and plans to the corresponding goals within a single planning query. For the remaining baselines (Lightning~\cite{berenson2012robot}, RRT-Connect~\cite{kuffner2000rrt}, E-graphs~\cite{Phillips-RSS-12}) the planner sequentially iterates through all the grasps in $\Phi$ and terminates when the success criterion is met or until the timeout. APP uses Lightning as its underlying motion planner \calP.

%\subsubsection*{Goal Region Specification}
We define the goal region \calG as a space of possible configurations of $o_T$ in the world. \calG is domain specific and depends on the allowable region of $o_T$ in the world as well as the geometry of $o_T$. To give an example, we used the bowl from the YCB dataset as the target object in our experiments. Since it is symmetric about the vertical axis and can rest on a planer surface, that limits the dimension of \calG to $\mathbb{R}^2$. For an object like a mug, its \calG would be in $SE(2)$. \calG is discretized to get a discrete set of poses $G$ of $o_T$.
% The discretization is chosen depending upon the precision required for the given domain.

% We test on four different simulated environments to show our proposed method's superiority over a number of common baselines. Specifically, we compare to RRT-Connect (RRTC)~\cite{kuffner2000rrt}, Lightning~\cite{berenson2012robot}, and Lazy PRM~\cite{}.

% TODO: results tables and experiment descriptions plus images here. 

\subsection{Experimental Scenarios}
Fig.~\ref{fig:scenarios} shows two of our example scenarios. In both the scenarios, we define the goal regions \calG as bounded $x,y$ planes since the target objects that we use are symmetric about the $z$-axis. The obstacle regions $\calQ(o_i)$ are also bounded $x,y$ planes and are identical for all obstacles in \calO.~\calG and $\calQ(o_i)$ both are discretized with a resolution of 2cm. We use occupancy grid to represent the occupancy of \calW which also has a 2cm discretization. The value of parameter $\epsilon$ (defined in Def.~\ref{def:envp}) in all the experiments is 20cm.
%
% We design these scenerios such that if an obstacle is place at every position $q(o_i) \in Q(o_i)$, none of the goals in $G$ is reachable. Without that, the planning problem becomes trivial, since we can treat the resultant \calW as completely static.
%
Table~\ref{table:results} shows the statistical results of all our experiments. The experiments were run on Intel Core i7-7800X CPU @3.5Ghz with 32 GB memory.

APP shows a success rate of 100\% for all the experiments, with several orders of magnitude speedup compared to the baselines. The baselines do well in the sorting domain but they suffer in shelving and the time-constrained sorting domains, which are more challenging.
Lightning's lower success rate for the shelving domain compared to APP is because randomly positioned obstacles can potentially create harder planning problems, compared to the problems it solves within the APP framework.
Lightning's lower success rate for the time-constrained domain is due to its higher planning time, which makes it harder to satisfy the overall timeout~$t_\textrm{task}$.
%
% \hl{Even though APP uses Lightning as its underlying planner \calP, it achieves 100\% success rate which Lightning fails to achieve}
% \hl{For the time-constrained domain, the baselines show lower success rates due to their higher planning times, which makes it harder to satisfy the overall timeout}~$t_\textrm{task}$.
%
%Also, note that the performance of preprocessing and experience-based planners (LazyPRM, Lightning, E-graphs) drops with the increase in the number of movable obstacles.
The performance of preprocessing and experience-based planners (LazyPRM, Lightning, E-graphs) drops with the increase in the number of movable obstacles.
In the time constrained domain, optimal sampling-based planners (RRT*, BIT*) do not perform better than non-optimal planners, because besides being slower, their convergence rate is not as fast as the time bound requires. Secondly more significant cost reduction is achieved by post-processing in our domains, which is used for all the planners.

%%%%%%%%%%%%%%%%%%%%%%%%%%%%%%%%%%%%%%%%%%%%%%%%
\textbf{Shelving/Unshelving:}
 %In the shelving/unshelving scenario, the robot is tasked with reaching out to a target object (soup can in this example from the YCB dataset) placed in the rear section of a shelf, while avoiding other objects arbitrarily located at the front section of the shelf. Conversely the task could be to place the object in the rear section of the shelf. We approximate each $o_i \in \calO$ as a sphere (voxelized) of radius 6cm. The planning problem is challenging because the static shelf and the movable obstacles create narrow passages in the configuration space of the robot. Since the target object is symmetric about the vertical $z$-axis,~\calG is a bounded $x,y$ plane in the rear section of the shelve.
 In the shelving/unshelving scenario, the robot is tasked with reaching a target object placed in the rear section of a shelf while avoiding other objects. % arbitrarily located at the front section of the shelf.
 %Conversely the task could be to place the object in the rear section of the shelf.
 We approximate each $o_i \in \calO$ as a voxelized sphere of radius 6cm.
 This problem is challenging because the movable obstacles create narrow passages in the configuration space of the robot.
 Since the target object is symmetric about the vertical axis,~\calG is a bounded plane in the rear section of the shelve.
 
%%%%%%%%%%%%%%%%%%%%%%%%%%%%%%%%%%%%%%%%%%%%%%%%%%%%%%%%%%%%%%%%%%%%%%%%%%%%%%%
\textbf{Mail Sorting:}
%For the mail-sorting task, the robot picks up packages from a tabletop while avoiding collisions with other packages and sorts them in cubby shelves. We approximate each $o_i \in \calO$ as a sphere (voxelized) of radius 8cm for this task. This problem is challenging because in addition to planning to a grasp pose, the planner needs to search for a valid pregrasp pose which gets harder with more clutter around the target object. Additionally the target cubby creates a narrow passage for the motion planner.
The robot must pick up packages from a tabletop while avoiding collisions with other packages and put them in a cubby. We approximate each $o_i \in \calO$ as a voxelized sphere of radius 8cm. %This problem is challenging because i
In addition to planning to a grasp pose, the planner needs to search for a valid pregrasp pose which gets harder with more clutter around the target object. Additionally, the target cubby creates a narrow passage for the motion planner.
An example of the preprocessing phase for a single start and goal pair for this domain is depicted in Fig.~\ref{fig:pr} 
% \begin{figure*}[t]
%     \centering
%     \includegraphics[width=0.5\textwidth]{}
%     \includegraphics[width=0.45\textwidth]{}
%     \caption{Planning times for two use cases: mail sorting (left) and shelving (right).}
%     \label{fig:boxplot}
% \end{figure*}

%%%%%%%%%%%%%%%%%%%%%%%%%%%%%%%%%%%%%%%%%%%%%%%%%%%%%%%%%%%%%%%%%%%%%%%%%%%%%%%
%\textbf{Mail Sorting under Time Constraint:}
\textbf{Time-Constrained Mail Sorting:}
%In this domain we add an additional constraint on the robot that both the planning and execution must be completed within an overall timeout $t_\textrm{task}$. Such constraints are often required for robots operating at conveyor belts~\cite{islam2020provably}. This setting makes the planning problem even harder because the manipulation planner not only has to plan fast but also must return a solution that is executable within the remainder of the time. We use $t_\textrm{task} = 2.0s$ in our experiments.
We add an additional constraint on the robot that both the planning and execution must be completed within an overall timeout $t_\textrm{task}=2.0s$. Such constraints are often required for robots operating at conveyor belts~\cite{islam2020provably}. This setting makes the planning problem even harder because the manipulation planner not only has to plan fast but also must return a solution that is executable within the remainder of the time.
%We use $t_\textrm{task} = 2.0s$ in our experiments.

\subsection{Real-World Case Study}
We also tested APP on a pick-and-place task inspired by a kitchen environment, both in simulation and in the real world. We used Isaac Gym~\cite{liang2018gpu} for our simulation tests. The task is shown in Fig.~\ref{fig:real-world}. The goal is for the robot to pick up the red bowl, while avoiding two obstacles: large blue pitchers from the YCB object set~\cite{calli2015ycb}. We approximate the geometry of each pitcher with two spheres (one on top of the other).
The generation of paths with APP took less than 10 microseconds and was 100\% successful. Object poses were estimated by PoseCNN~\cite{xiang2017posecnn}.
%The system was implemented using the Robot Operating System (ROS)~\cite{quigley2009ros}.
On average, perception took $0.22 \pm 0.3$ seconds to return accurate poses; the high variance was due to some obstacle configurations being more challenging than others. For videos, see the supplementary materials.\footnote{Experiment videos: \url{https://bit.ly/34F8LrP}}

\begin{figure}[t]
    \centering
    \begin{subfigure}{0.235\textwidth}
    %   \centering
        \includegraphics[trim={0.5cm 1.5cm 3cm 0},clip, width=\textwidth]{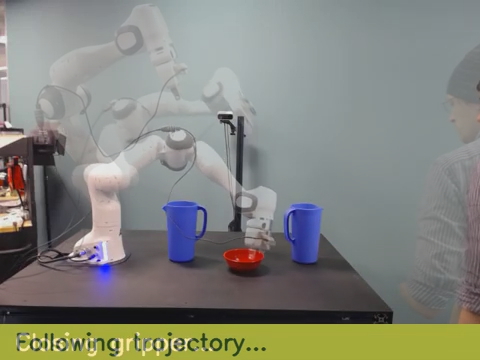}
        \caption{}
        \label{fig:real}
    \end{subfigure} 
    \begin{subfigure}{0.235\textwidth}
    %   \centering
        \includegraphics[trim={8cm 0.55cm 8cm 0.55cm},clip, width=\textwidth]{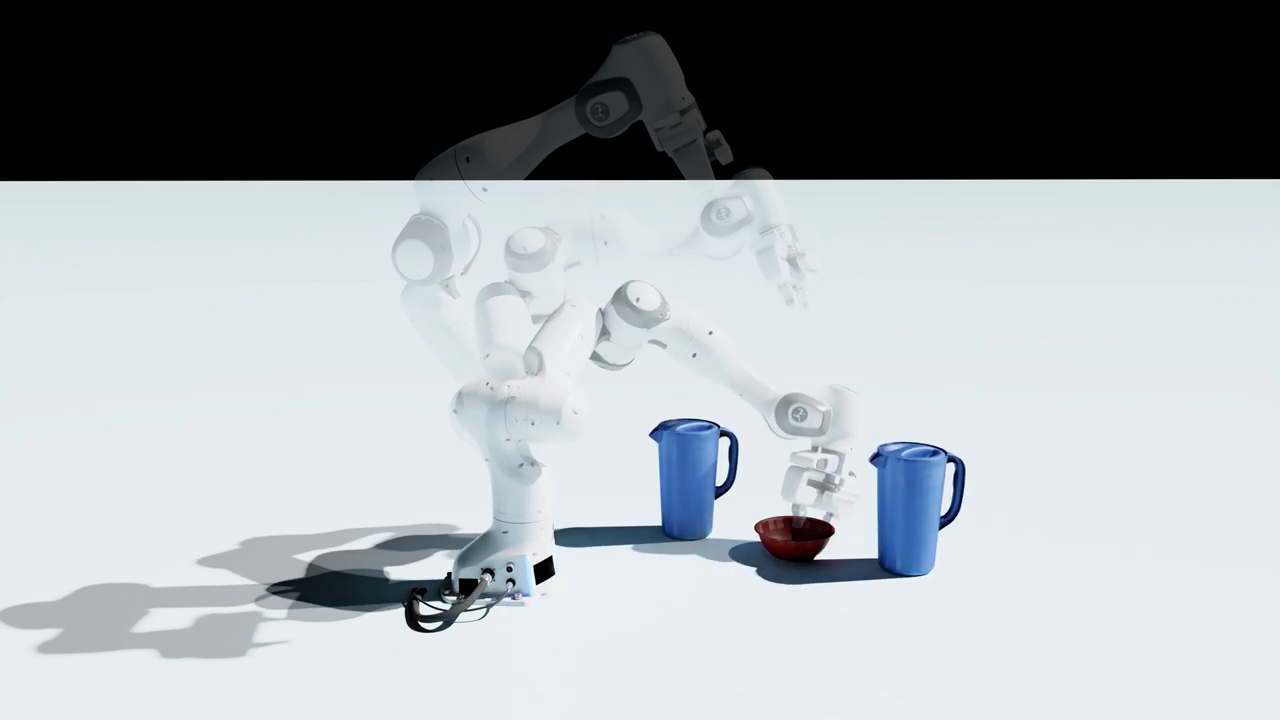}
        \caption{}
        \label{fig:sim}
    \end{subfigure}
    \caption{
    Real-world case study in a kitchen environment: 
     (\subref{fig:real})~Snapshots of a collision-free path genearated by APP in the real world.
    (\subref{fig:sim})~Snapshots of a collision-free path in simulation.
    }
\label{fig:real-world}
\end{figure}

\section{Conclusion and Future Work}
APP extends the applicability of existing fixed-time planning algorithms from highly-structured to semi-structured environments by explicitly accounting for movable obstacles during preprocessing.
A noteworthy limitation of APP is that its preprocessing times can grow drastically if the movable obstacles increase beyond a certain number.
An interesting future direction is to develop an ``anytime" variant of APP which tries to maximize the number of possible queries that it can handle at query time, within the allowed preprocessing time. Another direction is to relax our assumption on the obstacle geometry. Our real world experiment is a step towards this direction.

% \clearpage
\bibliographystyle{IEEEtran}
\bibliography{main}

\end{document}